\documentclass{article}

    \PassOptionsToPackage{numbers, compress}{natbib}

\usepackage[preprint]{neurips_2026}

\usepackage[utf8]{inputenc} 
\usepackage[T1]{fontenc}    
\usepackage{hyperref}       
\usepackage{url}            
\usepackage{booktabs}       
\usepackage{amsfonts}       
\usepackage{nicefrac}       
\usepackage{microtype}      
\usepackage{xcolor}         

\usepackage{amsmath}
\usepackage{amssymb}
\usepackage{mathtools}
\usepackage{amsthm}
\usepackage{algorithm}
\usepackage{algorithmic}
\usepackage{bigdelim}
\usepackage{pifont} 
\usepackage{tikz}
\usetikzlibrary{decorations.pathreplacing,calc}
\newcommand{\tikzmark}[1]{\tikz[overlay,remember picture] \node (#1) {};}
\definecolor{truegreen}{rgb}{0.1, 0.6, 0.1}
\definecolor{truered}{rgb}{0.8, 0.0, 0.0}
\definecolor{midamber}{rgb}{1.0, 0.49, 0.0}
\newcommand{\cmark}{\textcolor{truegreen}{\ding{51}}} 
\newcommand{\xmark}{\textcolor{truered}{\ding{55}}}   

\usepackage[capitalize,noabbrev]{cleveref}

\theoremstyle{plain}

\newtheorem{proposition}{Proposition}

\theoremstyle{definition}

\theoremstyle{remark}

\DeclareMathOperator*{\argmax}{arg\,max}

\newcommand{\n}{\mathrm{normal}}
\newcommand{\QMC}{\mathrm{QMC}}
\newcommand{\RQMC}{\mathrm{RQMC}}

\newcommand{\PM}{\mathrm{PM}}
\newcommand{\MVN}{\mathrm{MVN}}
\newcommand{\Poisson}{\mathrm{Poisson}}
\newcommand{\W}{W^{1/2}}
\newcommand{\z}{\hat{z}}
\newcommand{\trace}{\mathrm{trace}}

\newcommand{\SSS}{\mathcal{S}}

\newcommand{\T}{\mathcal{T}}
\newcommand{\Stu}{\mathrm{StudentT}}

\title{Corrected Integrated Laplace Approximation for\\ Bayesian Inference in Latent Gaussian Models}

\author{%
  Jinlin Lai\\
  Manning College of Information and Computer Sciences\\
  University of Massachusetts Amherst\\
  \texttt{jinlinlai@cs.umass.edu} \\
  \AND
  Charles C. Margossian \\
  Department of Statistics\\
  University of British Columbia \\
  \texttt{charles.margossian@ubc.ca} \\
  \AND
  Daniel R. Sheldon \\
  Manning College of Information and Computer Sciences\\
  University of Massachusetts Amherst\\
  \texttt{sheldon@cs.umass.edu} \\
}

\begin{document}

\maketitle

\begin{abstract}
Latent Gaussian models (LGMs) are a popular class of Bayesian hierarchical models that include Gaussian processes, as well as certain spatial models and mixed-effect models. Efficient Bayesian inference of LGMs often requires marginalizing out the latent variables. For LGMs with a non-Gaussian likelihood, exact marginalization is not possible and a popular approach is to do approximate marginalization with an integrated Laplace approximation (ILA). Using ILA produces an approximate posterior which, in some settings, can differ significantly from the correct posterior, which impacts downstream applications.
We propose an importance sampling scheme to correct the error introduced by ILA. By increasing the number of samples in importance sampling, the posterior with ILA converges to the correct posterior. This idea is realized with various techniques, including pseudo-marginalization, quasi-Monte Carlo and randomized quasi-Monte Carlo. 
We implement our methods in an automatic differentiation framework to support gradient-based algorithms when doing inference on the hyperparameters. For the latter, we specifically consider the use of Hamiltonian Monte Carlo. We demonstrate the benefits of reduced error in various applied models.
\end{abstract}

\section{Introduction}

Latent Gaussian models (LGMs) are a popular class of Bayesian models, formulated as a joint density $\pi(\theta, z, y) = \pi(\theta) \pi(z|\theta) \pi(y|\theta, z)$, where $\theta$ is often termed the ``hyperparameter'', $z$ is the latent Gaussian variable and $y$ is the observation.
In detail, the hierarchical prior $\pi(z|\theta)$ is Gaussian.
Examples of LGMs include Gaussian process models \citep{williams2006gaussian} and generalized linear models \citep{nelder1972generalized}.
Doing Bayesian inference on such models amounts to computing the joint posterior distribution $\pi(\theta, z|y)$.
This is a well-known challenge because $z$ is often high-dimensional and the posterior has the intricate geometry characteristic of hierarchical models \citep{betancourt2015hierarchy, papaspiliopoulos2007general}.

A common approach to tackle this problem is to marginalize out $z$ and then perform Bayesian inference on $\pi(\theta|y)$, a target with a lower dimension and often a simpler geometry.
Posterior samples for $z$ are then obtained in a post-inference step.
Marginalization can be done exactly when $\pi(y|\theta, z)$ is Gaussian, else it must be done \textit{approximately} with a Laplace approximation \citep{tierney1986accurate, williams2006gaussian, rue2009approximate}.
This idea underlies the popular \textit{integrated nested Laplace approximation} \citep[INLA][]{rue2009approximate}.
INLA works effectively when $\pi(\theta|y)$ is well characterized around the mode and low-dimensional (``2--5, not more than 20'' \citep{rue2017inla}).
To overcome these limitations and scale to cases where $\theta$ is high-dimensional---potentially in the thousands---it has been proposed to run Markov chain Monte Carlo (MCMC) over $\pi(\theta|y)$~\citep{kristensen2016tmb, monnahan2018no, margossian2020hamiltonian} and in particular gradient-based samplers, such as Hamiltonian  Monte Carlo \citep[HMC][]{duane1987hybrid, neal2012hmc} which are known to scale favorably with dimension.
For this, \citet{margossian2020hamiltonian} developed \textit{adjoint-differentiated Laplace approximation} (ADLA), an automatic differentiation algorithm which efficiently computes the gradient of the approximate marginal log posterior, $\nabla_\theta \log \hat \pi(\theta|y)$, obtained with an integrated Laplace approximation.

Unfortunately, all the above described methods are subject to the approximation error introduced by the Laplace approximation.
An alternative is to simply run MCMC over the full posterior $\pi(\theta, z|y)$ but this approach, while asymptotically exact, is often computationally expensive, if not infeasible.
In this work, we develop methods that both benefit from marginalization with Laplace approximation and give an asymptotically correct posterior.

Our main idea is importance sampling. We show that the integrated Laplace approximation can be reformulated as an importance sampling estimator with one sample and the error of this estimator can be reduced with multiple samples. First, we propose ADLA using pseudo-marginalization~\citep{andrieu2009pseudo},
an approach which targets an extended distribution over the model parameters and auxiliary random variables used for importance sampling,
and we show that this approach is asymptotically correct.
Next, we propose to fix the randomness using quasi-Monte Carlo (QMC).
While QMC-based approaches remain asymptotically correct, we find that in practice they can have a large error, even when the number of samples used for importance sampling is large.
Finally, we unify pseudo-marginalization and QMC and propose ADLA with randomized QMC, an approach which is both computationally efficient and asymptotically correct.
We implement the proposed methods in a reverse-mode automatic differentiation framework and test them across several applications.
We find that our proposed methods provide reduced error with the computational benefits of marginalization.


\section{Background}
\subsection{Latent Gaussian models}
LGMs observe the following hierarchical structure,
\begin{align}
    \theta\sim \pi(\theta),\ z\sim \n(0,K(\theta)),\ y\sim \pi(y|\theta,z),
\end{align}
where $\theta$ is the hyperparameter, $z$ comprises the latent Gaussian effects, and $y$ is the observation. 
Here, the prior Gaussian on $z$ is centered at 0 but off-set from 0 can be encoded in the likelihood $\pi(y|\theta, z)$.
Without loss of generality, we split $\theta$ into two, $\theta=[\xi,\eta]$, where $\xi$ parametrizes the prior covariance $K$, meaning $K(\theta) = K(\xi)$, and $\eta$ parametrizes the likelihood, meaning $\pi(y|\theta, z)=\pi(y|\eta,z)$. 


\subsection{Adjoint-differentiated Laplace Approximation}
A general way of approximately marginalizing $z$ from LGMs is the Laplace approximation. In detail, the Laplace approximation approximates the conditional posterior $\pi(z|\theta,y)$ by a Gaussian, which matches the mode and curvature of the conditional posterior,
\begin{align}
\label{eq:la_mode_curvature}
    \z=\argmax_{z}\log \pi(y,z|\theta),\quad \Sigma^{-1}=-\nabla_z\nabla_z\log \pi(y,\z|\theta), 
\end{align}
where $\z$ can be obtained from any optimization algorithm (Newton's method is often used in practice).
Then $\pi(z|\theta,y)$ is approximated by $\hat{\pi}(z|\theta,y)=\n(\z, \Sigma)$. 
The approximation is exact when the likelihood $\pi(y|\theta,z)$ is normal and so, by conjugacy, $\pi(z|\theta, y)$ is also normal. 
In practice, this approximation is well justified when $\pi(y|\theta,z)$ is log-concave \citep{shun95taylor}, although it has also been explored in other contexts \citep[e.g.,][]{vanhatalo09gp}.

For any fixed $z$, the marginalized model can be approximated by
\begin{align}
\label{eq:la_approximation_mode}
\hat{\pi}_{z}(\theta,y)\coloneq\frac{\pi(\theta)\pi(z|\theta)\pi(y|\theta,z)}{\hat{\pi}(z|\theta,y)}\approx \frac{\pi(\theta)\pi(z|\theta)\pi(y|\theta,z)}{\pi(z|\theta,y)}=\pi(\theta,y).
\end{align}
This approximation is valid for any $z$, but $z=\z$ from the optimizer is usually used. Note that $\hat{z}=\hat{z}(\theta)$ is a function of $\theta$.
We omit this dependence here for simplicity, but it is important when computing the gradients.
%
%
%

With approximation (\ref{eq:la_approximation_mode}), we can run inference on a lower-dimensional model $\hat{\pi}_{\z}(\theta,y)$ and get an approximate marginal posterior $\hat{\pi}_{\z}(\theta|y)$. In this work, we focus on Hamiltonian Monte Carlo (HMC)~\citep{duane1987hybrid, neal2012hmc}, an MCMC sampler that scales well in high-dimensions but requires computing $\nabla_\theta \log \hat{\pi}_{\z}(\theta,y)$. 
The calculation of this gradient must be done carefully: in particular, we need to account for the fact that $\z$ depends implicitly on $\theta$ and that the Laplace approximation is obtained using second-order derivatives of $\log \pi(z|y, \theta)$ to compute $\Sigma$.
Calculating the gradient can be done efficiently using the implicit function theorem and adjoint methods of automatic differentiation, as in the adjoint-differentiated Laplace approximation (ADLA)~\citep{margossian2020hamiltonian,margossian2023general}.
Appendix~\ref{sec:adla_detail} provides details on ADLA.
Here, we implement ADLA in JAX~\citep{jax2018github}, a high-performance automatic differentiation library.


HMC generates asymptotically unbiased samples from the approximate marginal posterior $\hat{\pi}_{\z}(\theta|y)$. 
When the approximation in Eq. (\ref{eq:la_approximation_mode}) is not exact, the posterior approximation from HMC differs from the true posterior $\pi(\theta|y)$, even asymptotically. We call this issue `posterior error' and propose to correct the error by closing the gap in Eq. (\ref{eq:la_approximation_mode}).
In this paper, we construct approximate marginal models with importance sampling that better approximate $\pi(\theta,y)$, and then derive an algorithm to compute gradients.

\begin{table*}[t]
\centering
\caption{Comparison of different methods of marginalization on efficiency, posterior correctness, dimension change and continuity of the objective. }
\label{tab:alg_comparison}
\begin{tabular}{lllll}
\toprule
\textbf{Algorithm} & Efficiency & Correct posterior & Dimension & Continuity\\ \midrule
No marginalization         & \xmark         & \cmark      & \textcolor{black}{Same}             & \cmark\\
ADLA~\cite{margossian2023general} &\cmark&\xmark&\textcolor{truegreen}{Lower}&\cmark\\
PM-ADLA (Sec. \ref{sec:pm}) &\cmark&\cmark&\textcolor{truered}{Possibly Higher}&\cmark\\
QMC-ADLA (Sec. \ref{sec:qmc}) &\cmark&\cmark (Asymptotically)&\textcolor{truegreen}{Lower}&\cmark\\
RQMC-ADLA (Sec. \ref{sec:rqmc}) &\cmark&\cmark&\textcolor{black}{Same}&\xmark\\
\bottomrule
\end{tabular}
\end{table*}
\section{Reducing error with importance sampling}

We propose an unbiased estimator of the marginal model $\pi(\theta, y)$.
This estimator is similar to the approximation in Eq.~\eqref{eq:la_approximation_mode}, except that the Laplace approximation is not evaluated at the mode $\z$ but at a random point $z$ drawn from the Laplace approximation.
\begin{proposition}
    Consider the Laplace approximation $\hat \pi(z|\theta, y)$ in Eq.~\eqref{eq:la_mode_curvature} and let
    \begin{equation} \label{eq:laplace_approximation_z}
    \hat \pi_z(\theta, y) = \frac{\pi(\theta)\pi(z|\theta)\pi(y|\theta,z)}{\hat{\pi}(z|\theta,y)},
    \end{equation}
    where $z \sim \hat \pi(z|\theta, y)$.
    Then $\hat \pi_z(\theta, y)$ is an unbiased estimator of $\pi(\theta, y)$.
\end{proposition}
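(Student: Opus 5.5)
The plan is to compute the expectation of $\hat\pi_z(\theta,y)$ directly under the sampling distribution $z\sim\hat\pi(z|\theta,y)$, with $\theta$ and $y$ held fixed. The estimator is a standard importance-sampling weight: the target is the unnormalized joint $z\mapsto \pi(\theta)\pi(z|\theta)\pi(y|\theta,z)=\pi(\theta,z,y)$, and the proposal is the Gaussian $\hat\pi(z|\theta,y)=\n(\z,\Sigma)$. The mode $\z=\z(\theta)$ and the covariance $\Sigma=\Sigma(\theta)$ depend only on $\theta$ and $y$, not on the random draw. So conditional on $(\theta,y)$ the proposal is a fixed density, and no subtlety arises from its data-dependence.

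First, I would verify that the proposal is well defined and has full support. By Eq.~\eqref{eq:la_mode_curvature}, $\z$ is a maximizer, so $\Sigma^{-1}$ is (assumed) positive definite and $\n(\z,\Sigma)$ is a nondegenerate Gaussian. Its density is then strictly positive on all of $\mathbb{R}^d$. This guarantees that the ratio in Eq.~\eqref{eq:laplace_approximation_z} is well defined everywhere and that the support of the proposal contains the support of $\pi(\theta,z,y)$ as a function of $z$. This support condition is the only real requirement for unbiasedness.

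Second, I would write
\begin{equation*}
\mathbb{E}_{z\sim\hat\pi(z|\theta,y)}\bigl[\hat\pi_z(\theta,y)\bigr]=\int \frac{\pi(\theta)\pi(z|\theta)\pi(y|\theta,z)}{\hat\pi(z|\theta,y)}\,\hat\pi(z|\theta,y)\,dz=\int \pi(\theta,z,y)\,dz=\pi(\theta,y).
\end{equation*}
The proposal density cancels pointwise because it is nonzero everywhere. The integral of the joint over $z$ is the marginal by definition. All integrands are nonnegative, so Tonelli's theorem justifies the manipulation, and the expectation is finite and equal to $\pi(\theta,y)$ whenever the marginal is finite, which holds for a proper model.

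The main obstacle is minor: checking that the Gaussian proposal has full support, which requires $\Sigma$ to be positive definite at the optimum. I would state this as the standing assumption under which the Laplace approximation is defined, namely a strict local maximum with negative definite Hessian. This is automatic when $\pi(y|\theta,z)$ is log-concave in $z$, since the Gaussian prior term then makes the Hessian strictly negative definite. No variance bound is needed for unbiasedness, so I would not address whether the weights have finite second moment here.
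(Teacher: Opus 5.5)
Your proof is correct and matches the paper's argument: the paper also takes the expectation under $\hat\pi(z|\theta,y)$ and cancels the proposal density. The only cosmetic difference is that it first rewrites $\hat\pi_z(\theta,y)=\pi(\theta,y)\,\pi(z|\theta,y)/\hat\pi(z|\theta,y)$ and integrates the conditional posterior to $1$, rather than integrating the joint over $z$; your remark that $\Sigma$ must be positive definite, so the proposal has full support, makes explicit an assumption the paper leaves implicit.
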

\begin{proof}
  Observe that,
  \begin{align}
\hat{\pi}_z(\theta,y)=\pi(\theta,y)\frac{\pi(z|\theta,y)}{\hat{\pi}(z|\theta,y)}.\notag
\end{align}
Here, we may recognize that $\hat \pi(\theta, y)$ is an importance sampling estimator of $\pi(\theta, y)$ and so must be unbiased.
In detail,
\begin{equation}
    \mathbb E [\hat \pi_z(\theta, y)]= \int \pi(\theta,y)\frac{\pi(z|\theta,y)}{\hat{\pi}(z|\theta,y)} \hat{\pi}(z|\theta,y) \text d z = \pi(\theta, y) \int \pi(z|\theta,y) \text d z = \pi(\theta, y).\notag
\end{equation}
\end{proof}



%
If we have $n$ i.i.d. samples, $z_1,\ldots, z_n\sim \hat{\pi}(z|\theta,y)$, we can construct an averaged estimator for the true marginal
\begin{align}
\hat{\pi}_{z_{1:n}}(\theta,y)=\frac{1}{n}\sum_{i=1}^n\hat{\pi}_{z_i}(\theta,y).    \notag
\end{align}
This estimator is also unbiased from linearity of expectation, and its variance is of order $\mathcal{O}(n^{-1})$, so the estimation error converges to $0$ if we let $n\to\infty$.

In order to use our unbiased estimator of the marginal posterior in HMC, we must overcome two challenges.
First, standard HMC only works for a fixed target density, rather than a target density that varies with an auxiliary random variable $z_i$.
In the next two sections, we consider different ways to handle this random target density.
Second, we must generalize the automatic differentiation algorithm used by ADLA to compute $\nabla_{\theta}\log \hat{\pi}_{z_{1:n}}(\theta,y)$.
Indeed, in the classical setting of the integrated Laplace approximation, it suffices to propagate derivatives through $\z$.
Here, we must propagate derivatives through $z_i$.
We detail this procedure in Appendix~\ref{sec:adla_detail}.

\section{Pseudo-marginal ADLA}
\label{sec:pm}
To build a fixed approximate marginal model, our first idea is to target an extended distribution that includes the auxiliary random variables $z_i$ introduced to construct the approximate marginal model in Eq.~\eqref{eq:laplace_approximation_z}.
This approach is known as pseudo-marginalization in the literature~\citep{andrieu2009pseudo}. In detail, we express each sample $z_i$ as a transformation over some random auxiliary variables $\epsilon_i$,
\begin{align}
    z_i=\T_{\theta,y}(\epsilon_i)\coloneqq\z+\sqrt{\Sigma}\epsilon_i,\ \text{where }\epsilon_i\sim\pi(\epsilon_i)=\n(0,I),\notag
\end{align}
for $i=1,2,...,n$. Here $\sqrt{\Sigma}$ is a matrix square root such that $\sqrt{\Sigma}\sqrt{\Sigma}^T=\Sigma$, so each $z_i$ still follows $\hat{\pi}(z|\theta,y)=\n(\z,\Sigma)$. Then, our approximate marginal model is the importance sampling estimator multiplied by the density of the auxiliary variables:
\begin{align} \label{eq:laplace_pm}
    \hat{\pi}^{\PM}(\theta,\epsilon_{1:n},y)\coloneq \prod_{i=1}^n\pi(\epsilon_i)\left(\frac{1}{n}\sum_{i=1}^n\frac{\pi(\theta)\pi(\T_{\theta,y}(\epsilon_i)|\theta)\pi(y|\theta,\T_{\theta,y}(\epsilon_i))}{\hat{\pi}(\T_{\theta,y}(\epsilon_i)|\theta,y)}\right),
\end{align}
where we have replaced $z_i$ with $\T_{\theta,y}(\epsilon_i)$ when computing the density. This approximate marginal model has several nice properties. First, it shares the same marginal over $(\theta,y)$ as the original model.
\begin{proposition}
  $\hat{\pi}^{\PM}$, as defined in eq.~\eqref{eq:laplace_pm}, is an unbiased estimator of $\pi(\theta, y)$.
\end{proposition}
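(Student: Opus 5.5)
The plan is to read the claim as saying that $\pi(\theta,y)$ is recovered when we integrate $\hat{\pi}^{\PM}(\theta,\epsilon_{1:n},y)$ over the auxiliary variables $\epsilon_{1:n}$. Equivalently, the bracketed average in Eq.~\eqref{eq:laplace_pm} has expectation $\pi(\theta,y)$ when $\epsilon_{1:n}\overset{\text{iid}}{\sim}\n(0,I)$; the product $\prod_i\pi(\epsilon_i)$ is exactly the density we integrate against. So I will compute
\begin{align*}
\int \hat{\pi}^{\PM}(\theta,\epsilon_{1:n},y)\,\text d\epsilon_{1:n}
=\frac{1}{n}\sum_{i=1}^n\int \prod_{j=1}^n\pi(\epsilon_j)\,\hat{\pi}_{\T_{\theta,y}(\epsilon_i)}(\theta,y)\,\text d\epsilon_{1:n},
\end{align*}
using linearity to swap the sum and the integral. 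The swap is justified by Tonelli, since every term is nonnegative.

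Next, in the $i$-th term, the factors $\pi(\epsilon_j)$ with $j\neq i$ each integrate to one. What remains is $\int \pi(\epsilon_i)\,\hat{\pi}_{\T_{\theta,y}(\epsilon_i)}(\theta,y)\,\text d\epsilon_i$. For fixed $(\theta,y)$, the map $\T_{\theta,y}(\epsilon)=\z+\sqrt{\Sigma}\epsilon$ is an affine map with invertible linear part, because $\Sigma$ is positive definite. So when $\epsilon\sim\n(0,I)$, the variable $z=\T_{\theta,y}(\epsilon)$ has law $\n(\z,\Sigma)=\hat{\pi}(z|\theta,y)$. By the change-of-variables formula (the law of the unconscious statistician), the integral equals $\mathbb E_{z\sim\hat\pi(z|\theta,y)}[\hat\pi_z(\theta,y)]$.

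By the first Proposition, this expectation is $\pi(\theta,y)$. Averaging $n$ identical terms then gives $\pi(\theta,y)$, which proves the claim.

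The only real subtlety is that $\z$ and $\Sigma$ depend on $\theta$. They must be treated as fixed while integrating over $\epsilon$ for a given $\theta$, and this is legitimate because the integration is only over $\epsilon$. Beyond that, I need $\Sigma$ to be positive definite, so that $\sqrt{\Sigma}$ is invertible, and $\hat\pi(z|\theta,y)>0$ wherever $\pi(z|\theta,y)>0$. Both hold because $\hat\pi$ is a nondegenerate Gaussian, assuming $\z$ is a strict local maximum. I expect this support and nondegeneracy check to be the main point requiring care; everything else is routine.
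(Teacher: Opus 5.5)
Your proposal is correct and follows the paper's proof step for step. You swap the sum and the integral, integrate out the $\epsilon_j$ with $j\neq i$, and change variables $z_i=\T_{\theta,y}(\epsilon_i)$, so each term reduces to the single-sample importance-sampling identity and gives $\pi(\theta,y)$. Your added justifications (Tonelli, full support of the Gaussian proposal) are fine, with one correction: nondegeneracy of $\Sigma$ requires a negative-definite Hessian at $\z$, which a strict local maximum alone does not guarantee.
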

The proof is in Appendix~\ref{proof:pm}. By sampling from the extended posterior $\hat{\pi}^{\PM}(\theta,\epsilon_{1:n}|y)$, we also obtain samples from the marginal posterior $\pi(\theta|y)$. 
Here, the Laplace approximation does the marginalization in a ``pseudo'' way: the latent Gaussian is marginalized out but we augment the state space with additional noise variables $\epsilon$. We call this approach PM-ADLA.

We now argue that this model offers the benefits of the Laplace approximation and of importance sampling.
To see this, write
\begin{align}
    \hat{\pi}^{\PM}(\theta,\epsilon_{1:n},y)= \pi(\theta,y)\prod_{i=1}^n\pi(\epsilon_i)\left(\frac{1}{n}\sum_{i=1}^n\frac{\pi(\T_{\theta,y}(\epsilon_i)|\theta,y)}{\hat{\pi}(\T_{\theta,y}(\epsilon_i)|\theta,y)}\right).\notag
\end{align}
When $n$ is large, the last term $n^{-1} \sum_{i=1}^n\pi(\T_{\theta,y}(\epsilon_i)|\theta,y) / \hat{\pi}(\T_{\theta,y}(\epsilon_i)|\theta,y)$ approaches $1$ and the model is the true marginal $\pi(\theta,y)$ multiplied by $n$ independent normal distributions.
Then, the augmented model does not possess the intricate geometry of the original posterior $\pi(\theta, z|y)$, since $\epsilon$ is independent of $\theta$. 

Compared with ADLA, PM-ADLA corrects the posterior error, but does not reduce the problem dimension. Instead, if the dimension of $z$ is $d_z$, the model dimension even increases by $(n-1)d_z$ for $n>1$. In addition, at each step of HMC, PM-ADLA requires $n$ evaluations of the model density and its gradient.
(However, we still only need to compute the Laplace approximation once.)

\section{ADLA with quasi-Monte Carlo}
\label{sec:qmc}
Instead of introducing auxiliary random variables and in doing so increasing the model dimension, we can instead fix those variables. A naive approach is to sample the auxiliary variables $\epsilon_1,\ldots,\epsilon_n$ independently once in the beginning and reuse them during inference to ensure a fixed approximate model. The approximate marginal model is
\begin{align}
    \hat{\pi}_{\epsilon_{1:n}}^{\mathrm{IS}}(\theta,y)\coloneq \frac{1}{n}\sum_{i=1}^n\frac{\pi(\theta)\pi(\T_{\theta,y}(\epsilon_i)|\theta)\pi(y|\theta,\T_{\theta,y}(\epsilon_i))}{\hat{\pi}(\T_{\theta,y}(\epsilon_i)|\theta,y)}.\notag
\end{align}
Because the estimation variance of importance sampling is $\mathcal{O}(n^{-1})$, this converges to the correct posterior density as $n\to\infty$. 

A more efficient way is to sample correlated auxiliary variables to further reduce the estimation variance of importance sampling. Intuitively, we want the randomness to spread more uniformly than independent sampling. To achieve this, we can use a low-discrepancy sequence (LDS) in the space of the unit cube as the auxiliary variables. The approach, called quasi-Monte Carlo (QMC)~\citep{caflisch1998monte}, further reduces the estimation variance of importance sampling to $\mathcal{O}(n^{-2})$ under proper conditions. We demonstrate that QMC works better than raw importance sampling for correcting the error in Appendix \ref{sec:raw_is}. In this work, we use the Sobol sequence, but other sequences, such as the Halton sequence, could also be applied. To use auxiliary variables in the unit cube, we change our reparameterization to,
\begin{align}
z_i=\T_{\theta,y}(u_i)=\z+\sqrt{\Sigma}\Phi^{-1}(u_i),\ \text{where }u_i\sim \mathrm{uniform}([0,1]^{d_z}),\notag
\end{align}
for $i=1,\ldots,n$. Here $\Phi^{-1}$ is the inverse-CDF function of the unit normal distribution, so $z_i\sim \hat{\pi}(z|\theta,y)$, ensuring our estimator is a valid importance sampling estimator. With this parameterization, we fix an LDS $u_1,\ldots u_n$ and denote the QMC-based approximate marginal density by
\begin{align}\label{eq:laplace_qmc}
    \hat{\pi}_{u_{1:n}}^{\QMC}(\theta,y)\coloneq \frac{1}{n}\sum_{i=1}^n\frac{\pi(\theta)\pi(\T_{\theta,y}(u_i)|\theta)\pi(y|\theta,\T_{\theta,y}(u_i))}{\hat{\pi}(\T_{\theta,y}(u_i)|\theta,y)}.
\end{align}
We call this method QMC-ADLA. The approximate marginal distribution with the original ADLA can be viewed as QMC-ADLA where each $u_i=1/2$, making $z_i=\z$.

$\hat{\pi}_{u_{1:n}}^{\QMC}(\theta,y)$ defines an approximate posterior $\hat{\pi}_{u_{1:n}}^{\QMC}(\theta|y)$.
The next proposition shows that the total variation distance between $\hat{\pi}_{u_{1:n}}^{\QMC}(\theta,y)$ and $\pi(\theta|y)$ vanishes as $n$ increases.
\begin{proposition}
If there exists a function $g(\theta)$ such that $\hat{\pi}_{u_{1:n}}^{\QMC}(\theta,y)<g(\theta)$ and $\int g(\theta)d\theta<\infty$, then $\int \left|\hat{\pi}_{u_{1:n}}^{\QMC}(\theta|y)-\pi(\theta|y)\right|d\theta=0$ as $n\to\infty$,
\end{proposition}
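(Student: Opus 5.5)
We argue by dominated convergence after normalizing. Write $f_n(\theta)\coloneq\hat{\pi}_{u_{1:n}}^{\QMC}(\theta,y)$ and $f(\theta)\coloneq\pi(\theta,y)$. The approximate posterior is $\hat{\pi}_{u_{1:n}}^{\QMC}(\theta|y)=f_n(\theta)/Z_n$ with $Z_n=\int f_n(\theta)\,d\theta$, and the target is $\pi(\theta|y)=f(\theta)/Z$ with $Z=\pi(y)=\int f(\theta)\,d\theta$. The plan has three steps: (i) establish pointwise convergence $f_n(\theta)\to f(\theta)$ for each fixed $\theta$; (ii) use the dominating function $g$ to obtain $Z_n\to Z$ and $\int|f_n-f|\,d\theta\to 0$; (iii) pass to the normalized densities.

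\textbf{Step (i).} This is the main obstacle. For fixed $\theta$, we have
\begin{equation*}
f_n(\theta)=\pi(\theta,y)\,\frac1n\sum_{i=1}^n h_\theta(u_i),\qquad h_\theta(u)=\frac{\pi(\T_{\theta,y}(u)|\theta,y)}{\hat{\pi}(\T_{\theta,y}(u)|\theta,y)}.
\end{equation*}
Changing variables $z=\T_{\theta,y}(u)$ gives $\int_{[0,1]^{d_z}}h_\theta(u)\,du=\int \pi(z|\theta,y)\,dz=1$, exactly as in the unbiasedness argument of the first proposition of this section. We therefore need the QMC average of $h_\theta$ over the Sobol points to converge to its integral. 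The tool is the standard QMC convergence result: for a uniformly distributed (low-discrepancy) sequence, the QMC average converges to the integral for every Riemann-integrable $h$, and Koksma--Hlawka gives a rate for bounded-variation $h$.

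The difficulty is that $h_\theta$ is generally unbounded near the boundary of the cube, because $\Phi^{-1}$ blows up there. I would handle this in two ways.
\begin{itemize}
\item Cite the known extensions of QMC convergence to improperly integrable functions for Sobol/Halton points, in the style of Owen's results on growth conditions near the boundary.
\item Alternatively, state pointwise convergence as the operative assumption ("under proper conditions", as the paper already says for QMC).
\end{itemize}
I expect the paper's proof to take this for granted via the QMC consistency it cited, so I would state it explicitly as the ingredient being invoked. One should also note that $\T_{\theta,y}$ is a bijection, so the change of variables is legitimate.

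\textbf{Step (ii).} Since $0\le f_n\le g$ and $g$ is integrable, dominated convergence gives $Z_n=\int f_n\to\int f=Z$. It also gives $\int|f_n-f|\to 0$. For the second claim we need $|f_n-f|\le g+f$, which is integrable because $f$ is a proper joint density in $\theta$ (equivalently, Fatou shows $f\le g$ almost everywhere).

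\textbf{Step (iii).} Using $Z>0$, split
\begin{equation*}
\int\left|\frac{f_n}{Z_n}-\frac{f}{Z}\right|d\theta\le \frac{1}{Z_n}\int|f_n-f|\,d\theta+\left|\frac1{Z_n}-\frac1Z\right|Z .
\end{equation*}
Both terms vanish because $Z_n\to Z>0$. Hence the total variation distance tends to $0$, which is the statement. Alternatively, one could invoke Scheffé's lemma once pointwise convergence of the normalized densities is established.
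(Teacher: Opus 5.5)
Your proposal is correct and follows essentially the same route as the paper. You establish pointwise convergence $\hat{\pi}_{u_{1:n}}^{\QMC}(\theta,y)\to\pi(\theta,y)$ from QMC consistency, which the paper simply asserts ``by the properties of QMC'', so your explicit flagging of the unbounded integrand near the boundary of the cube is, if anything, more careful. You then use dominated convergence with $g$ to get $Z_n\to Z$. The only difference is the last step. The paper passes to pointwise convergence of the normalized densities and invokes Scheff\'e's lemma, while you apply dominated convergence directly to $|f_n-f|\le g+f$ and finish with the triangle-inequality split; this is equally valid.
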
 

The proof is in Appendix~\ref{proof:qmc_converge}. 
Naturally, for finite $n$, some posterior error still persists, and in our experiments, we find cases where the error is large even when $n=64$. Also, similar to PM-ADLA, QMC-ADLA requires $n$ density evaluations at each step of HMC.

\subsection{ADLA with randomized QMC}
\label{sec:rqmc}
It is possible to combine QMC and PM to construct an approximate marginal model that converges to the true marginal and does not increase the model dimension. The technique is randomized QMC (RQMC)~\citep{owen2000monte}, which introduces a single shift variable $U\sim \mathrm{uniform}([0,1]^{d_z})$ as the only auxiliary variable.
Next, let
\begin{align}
    u_i=(v_i+U)\% 1\notag
\end{align}
for $i=1,\ldots, n$, where ``$\%$'' is the modulo operator and $v_1,\ldots,v_n$ is a \emph{fixed} LDS in the space of $[0,1]^{d_z}$. To simplify our notation, we can define the mapping from $U$ to each $z_i$ by 
\begin{align}
    z_i=\SSS_i(U)\coloneq \T_{\theta,y}((v_i+U)\% 1).\notag
\end{align}
Then, the RQMC-based approximate marginal model is
\begin{align} \label{eq:laplace_rqmc}
    \hat{\pi}^{\RQMC}(\theta,U,y)\coloneq \pi(U)\left(\frac{1}{n}\sum_{i=1}^n\frac{\pi(\theta)\pi(\SSS_i(U)|\theta)\pi(y|\theta,\SSS_i(U))}{\hat{\pi}(\SSS_i(U)|\theta,y)}\right).
\end{align}
We call this estimator RQMC-ADLA, which is also correct for marginalization according to the following proposition. 
\begin{proposition}
  $\hat{\pi}^{\RQMC}$, as defined in eq.~\eqref{eq:laplace_rqmc}, is an unbiased estimator of $\pi(\theta, y)$.
\end{proposition}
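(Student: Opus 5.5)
Since $\pi(U)$ is the uniform density on $[0,1]^{d_z}$, the claim amounts to $\int_{[0,1]^{d_z}} \hat{\pi}^{\RQMC}(\theta,U,y)\,dU = \pi(\theta,y)$. Equivalently, the bracketed average in Eq.~\eqref{eq:laplace_rqmc}, with $U\sim\mathrm{uniform}([0,1]^{d_z})$, has expectation $\pi(\theta,y)$. The plan mirrors the argument for the earlier pseudo-marginal proposition: by linearity we reduce to a single summand, and then show that each summand is a one-sample importance sampling estimator of the kind covered by the first proposition of Section~3.

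First, I would rewrite each summand as $\pi(\theta,y)\,\pi(\SSS_i(U)|\theta,y)/\hat{\pi}(\SSS_i(U)|\theta,y)$. This uses only $\pi(\theta)\pi(z|\theta)\pi(y|\theta,z) = \pi(\theta,y)\pi(z|\theta,y)$. Linearity of expectation then gives
\begin{align}
\int \hat{\pi}^{\RQMC}(\theta,U,y)\,dU = \frac{1}{n}\sum_{i=1}^n \mathbb{E}_{U}\!\left[\hat{\pi}_{\SSS_i(U)}(\theta,y)\right],\notag
\end{align}
so it suffices to show that each expectation equals $\pi(\theta,y)$.

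The key step, and the only nontrivial one, is the distributional claim. For any fixed $v_i\in[0,1]^{d_z}$, if $U\sim\mathrm{uniform}([0,1]^{d_z})$ then $u_i=(v_i+U)\%1$ is also uniform on $[0,1]^{d_z}$. I would argue coordinatewise: the map $U\mapsto (v_i+U)\%1$ is a translation on the torus $\mathbb{R}^{d_z}/\mathbb{Z}^{d_z}$. It is a bijection of $[0,1)^{d_z}$ that is piecewise a translation, so its Jacobian is $1$ almost everywhere, and Lebesgue measure is preserved. The boundary set where a coordinate equals $1$ has measure zero and can be ignored. It then follows that $\SSS_i(U)=\z+\sqrt{\Sigma}\Phi^{-1}(u_i)$ has law $\n(\z,\Sigma)=\hat{\pi}(z|\theta,y)$, because $\Phi^{-1}$ applied componentwise to a uniform vector gives $\n(0,I)$. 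Note that the $z_i$ are marginally correctly distributed but correlated across $i$; this is harmless because only linearity of expectation is used.

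With $\SSS_i(U)\sim\hat{\pi}(z|\theta,y)$ established, the first proposition of Section~3 gives $\mathbb{E}_U[\hat{\pi}_{\SSS_i(U)}(\theta,y)]=\pi(\theta,y)$ directly. Averaging over $i$ completes the proof. Support considerations are as in that proposition: $\hat{\pi}$ is Gaussian, so it has full support, and the importance weights are well defined.
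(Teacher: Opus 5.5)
Your proof is correct and follows the paper's argument. Both use linearity over the $n$ summands, then the change of variables $z_i=\SSS_i(U)$ to reduce each term to the one-sample importance-sampling identity. You also justify why $\SSS_i(U)\sim\hat{\pi}(z|\theta,y)$: the modular shift preserves Lebesgue measure on the torus, and $\Phi^{-1}$ maps uniforms to standard normals. The paper's appendix proof takes this step for granted.
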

The proof is in Appendix~\ref{proof:rqmc}. Similar to PM-ADLA, as $n\to\infty$, the model converges to the true marginal multiplied by an independent uniform distribution over $U$. Since $U$ has the same dimension as $z$, RQMC-ADLA does not increase the model dimension, and can be viewed as a tunable reparameterization with the hyperparameter $n$. 

However, unlike previous approaches, the function $\SSS_i(U)$ is not continuous due to modulo operators, so the density $\hat{\pi}^{\RQMC}(\theta,U,y)$ is not continuous in $U$ and HMC is not directly applicable. Fortunately, given a fixed $U$, $\hat{\pi}^{\RQMC}(\theta,U,y)$ is continuous in $\theta$, so it is possible to use a Metropolis-within-Gibbs (MwG) sampler~\citep{gilks1995adaptive}. In each round, we update $\theta$ conditional on $U$ with HMC, and then update $U$ conditional on $\theta$ with Metropolis-Hastings. Details of the approach can be found in Appendix~\ref{sec:mwg}. Another drawback of RQMC-ADLA is that, in the Gibbs step, we need to evaluate the model density after updating each dimension, so there are $nd_z$ model density evaluations per step. We summarize all variants of ADLA in Table \ref{tab:alg_comparison}. 

\section{Recovering marginalized variables}
In ADLA, the posterior distribution of $z$ can be constructed using $\hat{\pi}(z|\theta,y)$. For example, within HMC, for each sample of $\theta$, we generate $z\sim \hat{\pi}(z|\theta,y)$ such that $(\theta,z)\sim \hat{\pi}(\theta,z|y)\approx \pi(\theta,z|y)$. We can use the same procedure in our methods. However, sampling from the Laplace approximation introduces an error. With importance sampling, we can get posterior samples of $z$ that are closer to the true posterior $\pi(z|\theta,y)$. 

In any of our methods, for each $\theta$, we have $n$ unnormalized importance weights
\begin{align}
    w_i\coloneq \frac{\pi(\theta)\pi(z_i|\theta)\pi(y|\theta,z_i)}{\hat{\pi}(z_i|\theta,y)},\notag
\end{align}
for $i=1,\ldots,n$, and also $n$ samples $z_1,\ldots, z_n$. Then, we let
\begin{align}
    z\sim \hat{\pi}(z|z_{1:n},\theta,y):=\frac{\sum_{i=1}^n w_i\delta_{z_i}(z)}{\sum_{i=1}^nw_i}.\notag
\end{align}
The distribution of $z$ then matches the true distribution $\pi(z|\theta,y)$ by the following proposition.
\begin{proposition}
    If for each $i$, $z_i\sim \hat{\pi}(\cdot|\theta,y)$, let $\hat{\pi}(\theta,z_{1:n},y)=\left(\frac{1}{n}\sum_{i=1}^nw_i\right) \hat{\pi}(z_{1:n}|\theta,y)$, then 
    \begin{align}
        \hat{\pi}(\theta,z,y)\coloneq\int  \hat{\pi}(z|z_{1:n},\theta,y)\hat{\pi}(\theta,z_{1:n},y)dz_{1:n}=\pi(\theta,z,y).\notag
    \end{align}
\end{proposition}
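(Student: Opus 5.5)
The plan is to compute the integral directly, using the self-normalized weights to cancel the importance-sampling average. The statement presumes that $z_1,\ldots,z_n$ each have marginal $\hat{\pi}(\cdot|\theta,y)$ under the joint law $\hat{\pi}(z_{1:n}|\theta,y)$, which covers i.i.d., QMC and RQMC-type constructions. For i.i.d.\ draws the joint law factorizes, but the argument below only uses the marginals.

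First, expand the integrand. Since $\hat{\pi}(z|z_{1:n},\theta,y)=\sum_i w_i\delta_{z_i}(z)/\sum_j w_j$, multiplying by $\hat{\pi}(\theta,z_{1:n},y)=\bigl(\frac1n\sum_j w_j\bigr)\hat{\pi}(z_{1:n}|\theta,y)$ cancels the normalizer $\sum_j w_j$. This cancellation is the key step. The integrand becomes
\begin{align}
\frac{1}{n}\sum_{i=1}^n w_i\,\delta_{z_i}(z)\,\hat{\pi}(z_{1:n}|\theta,y).\notag
\end{align}
Each $w_i$ depends only on $z_i$ (and $\theta$), so by linearity I integrate term by term. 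Integrating out every coordinate $z_j$ with $j\neq i$ leaves the marginal of $z_i$, which is $\hat{\pi}(z_i|\theta,y)$ by hypothesis. This gives
\begin{align}
\int w_i\,\delta_{z_i}(z)\,\hat{\pi}(z_i|\theta,y)\,dz_i = w(z)\,\hat{\pi}(z|\theta,y),\notag
\end{align}
where $w(z)=\pi(\theta)\pi(z|\theta)\pi(y|\theta,z)/\hat{\pi}(z|\theta,y)$. The product is therefore $\pi(\theta,z,y)$, and averaging $n$ identical terms gives $\pi(\theta,z,y)$.

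The main obstacle is being careful with the delta measure. The identity should be read as equality of measures in $z$: integrate both sides against a test function $f(z)$, or equivalently interpret $\int \delta_{z_i}(z)g(z_i)\,dz_i=g(z)$. Two further technical points need attention. The case $\sum_j w_j=0$ has probability zero, since the weights are positive wherever $\hat{\pi}>0$. Fubini applies because the integrand is nonnegative. I would also state explicitly that only the marginals of $z_i$ are needed, so correlated QMC or RQMC draws are covered as well.
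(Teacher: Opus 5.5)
Your proof is correct and follows the paper's argument essentially step for step: cancel the normalizer $\sum_j w_j$, split the sum by linearity, collapse the delta and integrate out $z_{-i}$ to leave $w_i\hat{\pi}(z_i|\theta,y)|_{z_i=z}=\pi(\theta,z,y)$; the only difference is that you marginalize before collapsing the delta rather than after. One small caveat on your aside: a fixed, deterministic QMC point set does not satisfy the hypothesis that each $z_i$ has marginal $\hat{\pi}(\cdot|\theta,y)$, so among the QMC variants only randomized constructions such as RQMC, where each $(v_i+U)\%1$ is uniform, are covered.
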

The proof is in Appendix \ref{proof:recover}. In the proposition, $\hat{\pi}(\theta,z_{1:n},y)$ is the importance sampling estimator multiplied by the density of $z_1,\ldots z_n$, equivalent to the approximate marginal model in Eq. \ref{eq:laplace_pm} with a change of variable. Coupled with the recovery distribution $\hat{\pi}(z|z_{1:n},\theta,y)$, we get a joint model $\hat{\pi}(\theta, z_{1:n},z,y)$ whose marginal is the model $\pi(\theta,z,y)$. Therefore, $\hat{\pi}(z|\theta,y)=\pi(z|\theta,y)$. At each step of sampling, we can recover $z$ directly from already computed intermediate results. This derivation is related to importance weighted variational inference~\citep{domke2018importance, domke2019divide}. 

\section{Related Work}

The integrated Laplace approximation plays an important role in the influential software packages INLA~\citep{rue2009approximate}, TMB~\citep{kristensen2016tmb}, and GPstuff~\citep{vanhatalo2013gpstuff}.
TMB's inverse subset algorithm \citep{kristensen2016tmb} and ADLA \citep{margossian2020hamiltonian, margossian2023general} propose implementations of the integrated Laplace approximation in probabilistic programming languages which use automatic differentiation, respectively TMB and Stan~\citep{carpenter2017stan}.
Within these frameworks, users have a great deal of flexibility when specifying their model, however the accuracy of the Laplace approximation is not guaranteed to hold for an arbitrary model.
To truly take advantage of the flexibility afforded by probabilistic programming languages, users need broadly applicable guarantees such as the ones we obtain for our methods.

Several papers have proposed replacing the integrated Laplace approximation with a more accurate approximation, for example using a copula-based correction \citep{ferkingstad15copula} or a higher-order Taylor expansion of $\log \pi(z|\theta, y)$ \citep{shun95taylor, chiuchiolo22inla}.
These methods can readily be combined with our proposed estimators, since our estimators (eqs.~\ref{eq:laplace_pm}, \ref{eq:laplace_qmc}, and \ref{eq:laplace_rqmc}) do not in fact rely on $\pi(z|\theta, y)$ being approximated by a standard Laplace approximation.
(On the other hand, designing differentiation algorithms for new approximations would require additional engineering effort.)


There is a rich literature on the use of importance sampling for approximate Bayesian inference.
\citet{berild22inla} combine importance sampling and the integrated Laplace approximation.
Specifically, importance sampling is used as a backend inference algorithm (instead of MCMC) to approximate the marginal posterior $\pi(\theta|y)$; by contrast, we construct an importance sampling estimator of the marginal joint $\pi(\theta, y)$.
The two methods can be combined and more generally, any (gradient-based) inference method can be used with our proposed estimators of $\pi(\theta, y)$.
Importance sampling has also been used to improve posterior approximation of variational inference \citep{yao18vi, domke2018importance}.
\citet{domke2019divide} generalize the idea and discuss QMC-based variational objectives. The recovery step of our methods is closely related to these works. There is another line of work that fixes the randomness of importance sampling-based variational objectives~\citep{pmlr-v244-burroni24a, giordano2024black}.
Finally, \citet{alenlov2021pseudo} explore using importance sampling within pseudo-marginalization for HMC. 
Different from all above works, the importance sampling in our work is applied to only part of the model, and uses the Laplace approximation as the proposal distribution. 

Our work can also be related to recent studies on learning the best reparametererization of Bayesian models~\citep{gorinova2020automatic, parno2018transport, hoffman2019neutra, komodel}. RQMC-ADLA can be viewed as a training-free reparameterization with convergence guarantees.

\section{Experiments}
Our methods are implemented in BlackJAX~\citep{cabezas2024blackjax}, using the default No-U-turn sampler (NUTS)~\citep{hoffman2014no} of HMC in NumPyro~\citep{bingham2019pyro,phan2019composable} as the inference algorithm. We demonstrate the performance of our methods with synthetic and real examples. All experiments are conducted on an Intel Xeon Platinum 8352Y CPU. In this section, we use `Base' to represent HMC without marginalization. 
When computing the estimation error, we aggregate the results of all `Base' runs as the ground-truth. Note that some `Base' runs report divergent transitions, but it is the only proxy we have for the ground-truth.

\subsection{Gaussian process}
\begin{figure*}[t]
\centering
    \includegraphics[width=0.99\linewidth]{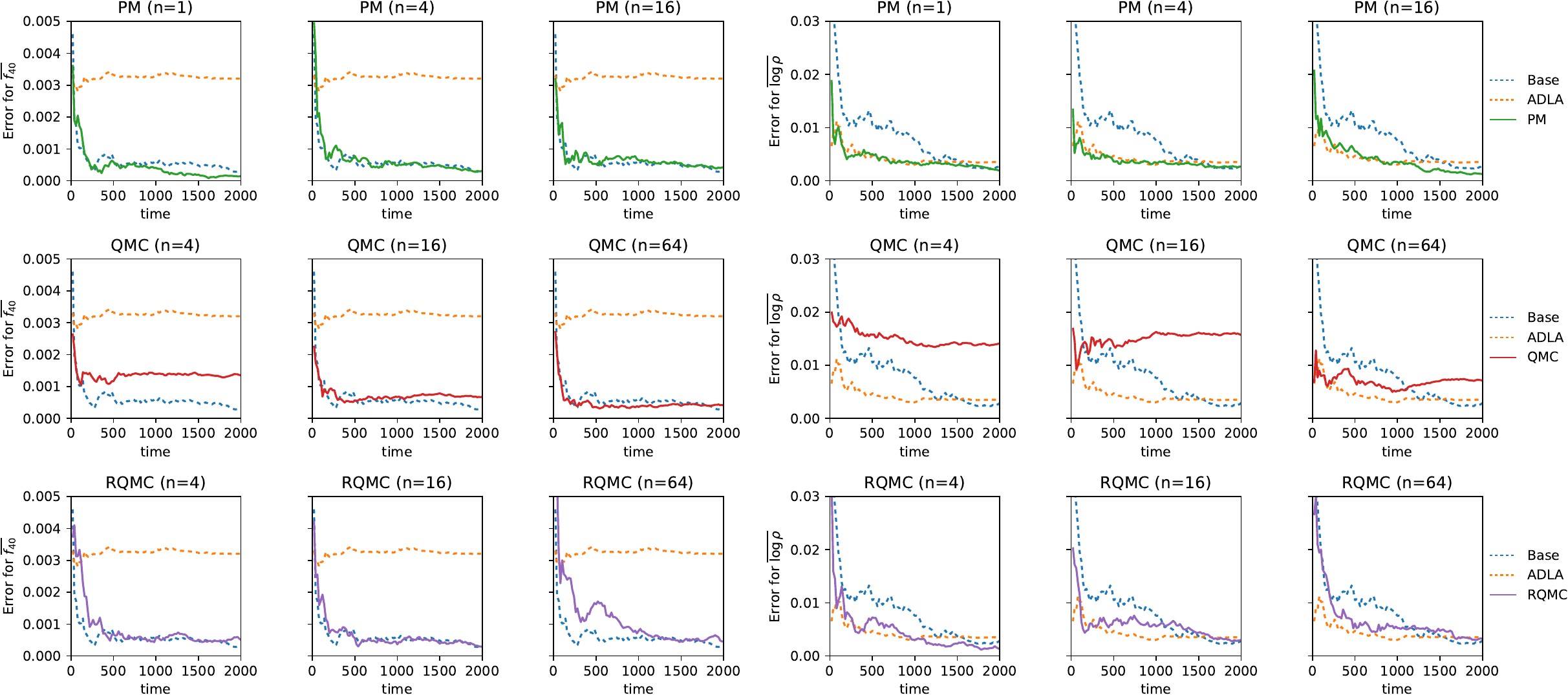}

    \caption{Error of estimating the means of parameters as a function of time in seconds for the synthesized Gaussian process with Poisson likelihood. Results are averaged from 5 independent runs. Ground-truth is estimated from NUTS on the unmarginalized model.}
        \label{fig:gppoisson_error}
\end{figure*}
A widely used family of LGMs is Gaussian process models. We synthesized a dataset of features $X\in\mathbb{R}^{N\times d}$ and responses $y\in \mathbb{N}^{N}$, where $N=100$ and $d=1$, and modeled these data with a log-Gaussian Cox process:
\begin{gather}
    \rho\sim\Gamma^{-1}(3,2),\ \alpha\sim\mathrm{Exponential}(1),\ f\sim \MVN(0,K(X,\rho,\alpha)),\ y\sim \Poisson(e^f),\notag
\end{gather}
where $K_{ij}(X,\rho,\alpha)=\alpha^2\exp\left(-\frac{1}{2\rho^2}(x_i-x_j)^2\right)$. Details can be found in the Appendix. The Laplace approximation can be applied to marginalize out the latent variables $f$. 

This model is difficult to sample with HMC directly. We find that, even with a non-centered parameterization~\citep{papaspiliopoulos2007general}, there are still on average 205 (out of 100,000, see Table \ref{tab:sgp_additional} in the Appendix) divergent transitions, which indicate occasional numerical instability due to the geometry of the posterior \citep{betancourt2015hierarchy}. 
In contrast, ADLA and our methods all report zero divergences, meaning the \textit{approximate} marginal posterior has a well-behaved geometry.
We then use the samples to estimate the mean of the model parameters. 

In Figure \ref{fig:gppoisson_error}, we plot the estimation error as a function of time. 
When estimating $\mathbb{E}[\log\rho]$, ADLA has lower error at first but eventually produces higher error than HMC. 
Furthermore, ADLA's estimation error for $\mathbb{E}[f_{40}]$ is evident from the beginning. 
In contrast, our proposed methods enjoy the speed-up of ADLA and reduced error as sampling proceeds. In the second row, we find that QMC gives better estimates for $\mathbb{E}[f_{40}]$ but worse estimates for $\mathbb{E}[\log\rho]$, and performs better with larger $n$. In the first and third rows, we see that both PM and RQMC have less error than ADLA, and faster inference than HMC for estimating $\mathbb{E}[\log\rho]$. Despite using multiple samples, we find that QMC's bias may be higher than ADLA. When PM or RQMC is applicable, we find it is possible to correct the error of ADLA with comparable convergence speed.

In Figure \ref{fig:gppoisson}, we further compare the running time and the effective sample size (ESS) per minute of different methods.
An important caveat with this metric is that the ESS characterizes the variance of the Monte Carlo estimator \textit{but not its bias}.
A high ESS/min can be interpreted as HMC efficiently sampling from the approximate posterior.
Only if this approximation is accurate, as verified in Figure~\ref{fig:gppoisson_error}, can we further interpret a high ESS/min as efficient sampling from the correct posterior. 

ADLA and QMC run fastest but can suffer from a high bias for certain parameters. 
%
%
We find that, although PM is multiple times slower than ADLA and even slower than base HMC, the fast mixing of the MCMC chains leads to a high ESS/min.
Furthermore, PM produces a low error across parameters.
RQMC is also slow because each Gibbs step requires multiple additional evaluations of the density function, which leads to the lowest ESS/min among the marginalization methods.

\begin{figure*}[t]
    \centering
    \includegraphics[width=0.4\linewidth]{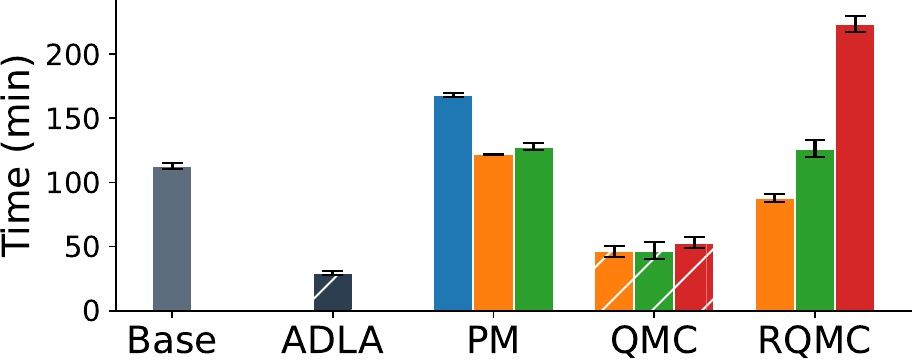}
    \includegraphics[width=0.49\linewidth]{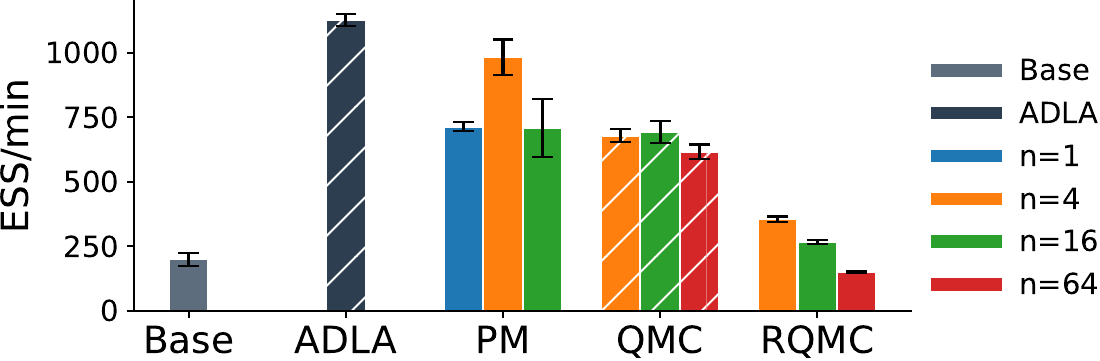}
    \caption{Running time to collect 100,000 samples and average ESS/min of $(\log\rho, \log\alpha)$ for the synthesized Gaussian process with Poisson likelihood. Hatched bars represent methods with error in the posterior. Results are collected from 5 independent runs.}
    \label{fig:gppoisson}
\end{figure*}

\subsection{Sparse kernel interaction model}
Next, we demonstrate our methods on a model applied to real data. Sparse kernel interaction models (SKIMs)~\citep{agrawal2019kernel} are an extension of generalized linear models that include interaction terms. We use the same data and model as \citet{margossian2020hamiltonian}. Details can be found in the appendix. We monitor two difficult variables in the model, $\log\tau$ and $\log\lambda_{2586}$. In this model, we find that PM is too slow to produce results in a reasonable time. The estimation error of other methods can be found in Figure \ref{fig:skim_error}. With a non-centered parameterization and NUTS, the sampling is still challenging: there are 153 divergent transitions (out of 300,000, see Table \ref{tab:skim_additional} in the Appendix). ADLA solves this problem, but there is also an evident error in estimating the mean of the variables. With QMC, the bias can be reduced, and as we increase $n$, the error becomes smaller. The error with RQMC is smaller than with ADLA and comparable to HMC. We demonstrate this further in Figure \ref{fig:skim}. Even with $n=64$ samples, QMC still leads to error in the posterior as seen in the right tail of $\log\tau$ and left tail of $\log\lambda_{2586}$. RQMC effectively reduces this error with $n=16$ samples.
\begin{figure}
    \centering
    \includegraphics[width=\linewidth]{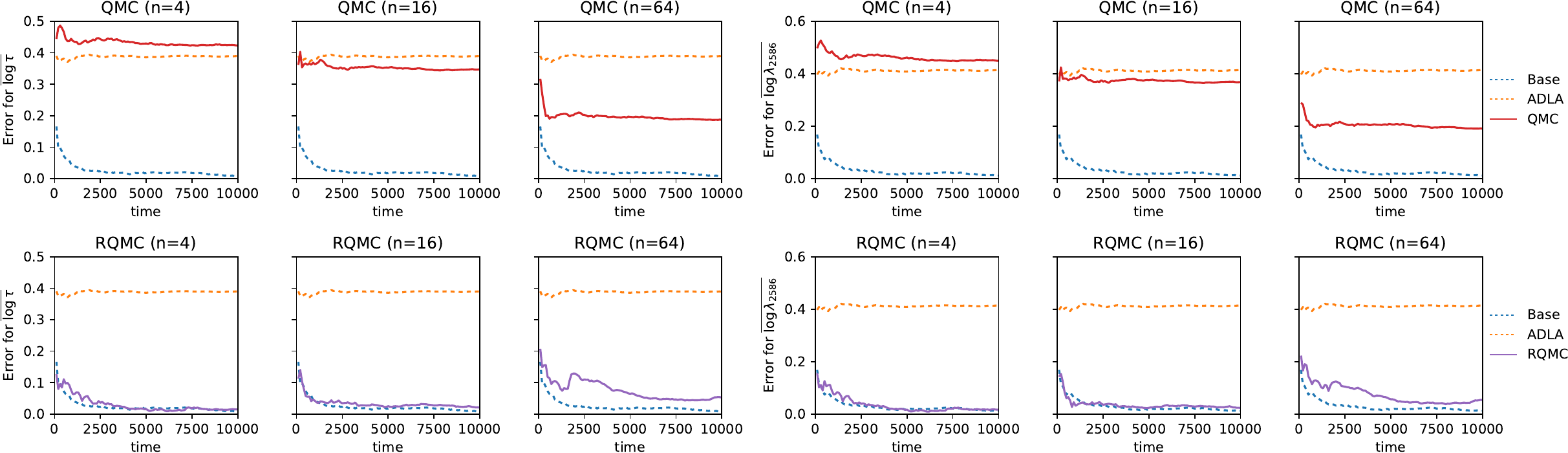}
    \caption{Error of estimating the means of parameters as a function of time in seconds for the sparse kernel interaction model. Results are averaged from 5 independent runs. Ground-truth is estimated from NUTS on the unmarginalized model. }
    \label{fig:skim_error}
\end{figure}
\begin{figure*}[t]
    \includegraphics[width=\linewidth]{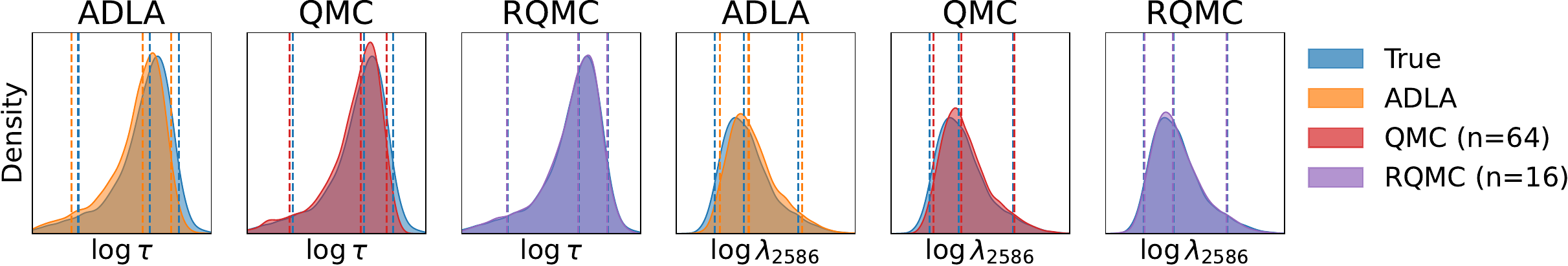}
    \caption{Comparing the sampled posterior against the ground-truth from NUTS. We demonstrate with two difficult variables: $\log\tau$ and $\log\lambda_{2586}$. Each posterior is estimated from 10,000 samples. The dashed lines are 5\%, 50\%, 95\% quantiles.}
    \label{fig:skim}
\end{figure*}
\subsection{Mixed-effects models}
Another class of LGMs are mixed-effects models, which are generalized linear models with both fixed and random effects. We use the \texttt{Epil2} dataset from glmmTMB~\citep{brooks2017glmmtmb} to demonstrate the bias reduction ability of our methods. \texttt{Epil2} studies the seizure counts for patients in a clinical trial~\citep{booth2003negative}, which has the likelihood 
\begin{align}
    y_i\sim \mathrm{NegativeBinomial}(\exp(x_i^T\beta+z_i^Tu_{g_i}),\phi)\notag
\end{align}
where for each data $i$, $y_i$ is the seizure count, $x_i$ is the fixed-effect predictor in $\mathbb{R}^6$
, $z_i$ is the random-effect predictor in $\mathbb{R}^2$
, $g_i$ is the subject index. Parameters in the model include the fixed effect $\beta$, the random effects $u_1,u_2,...,u_G$ for each subject, and the dispersion parameter $\phi$. We marginalize out the random effects for inference. More details can be found in the appendix. 

\begin{figure}[t]
    \centering
    \includegraphics[width=\linewidth]{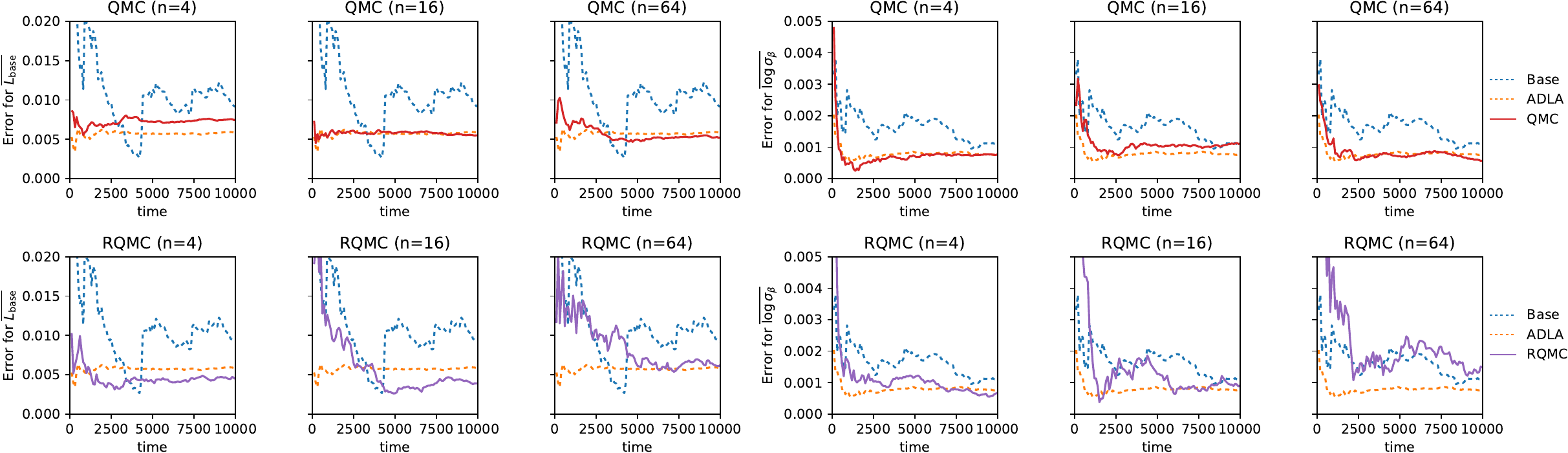}
    \caption{Error of estimating $\mathbb{E}[L_{\text{base}}]$ and $\mathbb{E}[\log\sigma_\beta]$ as a function of time in seconds for the \texttt{Epil2} model ($L_\text{base}$ is transformed from $L$ in the unconstrained space). Results are averaged from 5 independent runs. Ground-truth is estimated from NUTS with non-centered parameterization.}
    \label{fig:epil_error}
\end{figure}

For this model, we find NUTS with non-centered parameterization of random effects very efficient for inference, so we use it to get estimated ground-truths. In Figure \ref{fig:epil_error} we compare vanilla NUTS (without non-centered parameterization), ADLA, QMC, and RQMC by plotting the estimation error for means of two parameters as a function of time. In general, ADLA provides a fast estimation but has a non-vanishing estimation error. When estimating $\mathbb{E}[L_\text{base}]$, QMC with $n=64$ and RQMC with $n=4$ converge similarly as ADLA and have lower error at the end. When estimating $\mathbb{E}[\log\sigma_\beta]$, QMC is comparable with ADLA, but RQMC may give a higher error. This indicates that our methods may not always help inference and the hyperparameters should be carefully selected.

\section{Discussion}
We propose methods for reducing the error of the integrated Laplace approximation with an importance sampling estimator. The estimator is realized with various methods, including pseudo-marginalization, quasi-Monte Carlo and randomized quasi-Monte Carlo. Future work can explore different directions. 
First, a byproduct of importance sampling is its variance estimation, which may be used to diagnose how well the approximation is working. 
Second, in our experiments, HMC is used, but our methods can be combined with other (gradient-based) inference algorithms, such as variational inference (which has been combined with ADLA \citep{margossian25vi-symmetry}). Third, while we focus on latent Gaussian models, our methods may extend to non-Gaussian latent variables, which would require different approximate marginalization methods and different correction mechanisms. Finally, we believe it would be useful to develop high-performance implementations of our methods for the various libraries which use an integrated Laplace approximation.


\bibliography{main}

@article{margossian2020hamiltonian,
  title={Hamiltonian {M}onte {C}arlo using an adjoint-differentiated {L}aplace approximation: {B}ayesian inference for latent {G}aussian models and beyond},
  author={Margossian, Charles and Vehtari, Aki and Simpson, Daniel and Agrawal, Raj},
  journal={Advances in neural information processing systems},
  volume={33},
  pages={9086--9097},
  year={2020}
}

@article{margossian2023general,
  title={General adjoint-differentiated {L}aplace approximation},
  author={Margossian, Charles C},
  journal={arXiv preprint arXiv:2306.14976},
  year={2023}
}

@inproceedings{gorinova2020automatic,
  title={Automatic reparameterisation of probabilistic programs},
  author={Gorinova, Maria and Moore, Dave and Hoffman, Matthew},
  booktitle={International Conference on Machine Learning},
  pages={3648--3657},
  year={2020},
  organization={PMLR}
}

@article{papaspiliopoulos2007general,
  title={A general framework for the parametrization of hierarchical models},
  author={Papaspiliopoulos, Omiros and Roberts, Gareth O and Sk{\"o}ld, Martin},
  journal={Statistical Science},
  pages={59--73},
  year={2007},
  publisher={JSTOR}
}

@article{tierney1986accurate,
  title={Accurate approximations for posterior moments and marginal densities},
  author={Tierney, Luke and Kadane, Joseph B},
  journal={Journal of the american statistical association},
  volume={81},
  number={393},
  pages={82--86},
  year={1986},
  publisher={Taylor \& Francis}
}

@article{rue2009approximate,
  title={Approximate {B}ayesian inference for latent {G}aussian models by using integrated nested {L}aplace approximations},
  author={Rue, H{\aa}vard and Martino, Sara and Chopin, Nicolas},
  journal={Journal of the Royal Statistical Society Series B: Statistical Methodology},
  volume={71},
  number={2},
  pages={319--392},
  year={2009},
  publisher={Oxford University Press}
}

@book{williams2006gaussian,
  title={Gaussian processes for machine learning},
  author={Williams, Christopher KI and Rasmussen, Carl Edward},
  volume={2},
  number={3},
  year={2006},
  publisher={MIT press Cambridge, MA}
}

@article{nelder1972generalized,
  title={Generalized linear models},
  author={Nelder, John Ashworth and Wedderburn, Robert WM},
  journal={Journal of the Royal Statistical Society Series A: Statistics in Society},
  volume={135},
  number={3},
  pages={370--384},
  year={1972},
  publisher={Oxford University Press}
}

@article{caflisch1998monte,
  title={Monte {C}arlo and quasi-{M}onte {C}arlo methods},
  author={Caflisch, Russel E},
  journal={Acta numerica},
  volume={7},
  pages={1--49},
  year={1998},
  publisher={Cambridge University Press}
}

@article{andrieu2009pseudo,
   title={The pseudo-marginal approach for efficient {M}onte {C}arlo computations},
   volume={37},
   ISSN={0090-5364},
   number={2},
   journal={The Annals of Statistics},
   publisher={Institute of Mathematical Statistics},
   author={Andrieu, Christophe and Roberts, Gareth O.},
   year={2009},
   month=Apr 
}

@inproceedings{owen2000monte,
  title={Monte {C}arlo, quasi-{M}onte {C}arlo, and randomized quasi-{M}onte {C}arlo},
  author={Owen, Art B},
  booktitle={Monte-Carlo and Quasi-Monte Carlo Methods 1998: Proceedings of a Conference held at the Claremont Graduate University, Claremont, California, USA, June 22--26, 1998},
  pages={86--97},
  year={2000},
  organization={Springer}
}

@article{gilks1995adaptive,
  title={Adaptive rejection {M}etropolis sampling within {G}ibbs sampling},
  author={Gilks, Wally R and Best, Nicky G and Tan, Keith KC},
  journal={Journal of the Royal Statistical Society Series C: Applied Statistics},
  volume={44},
  number={4},
  pages={455--472},
  year={1995},
  publisher={Oxford University Press}
}

@article{domke2018importance,
  title={Importance weighting and variational inference},
  author={Domke, Justin and Sheldon, Daniel R},
  journal={Advances in neural information processing systems},
  volume={31},
  year={2018}
}

@article{domke2019divide,
  title={Divide and couple: Using {M}onte {C}arlo variational objectives for posterior approximation},
  author={Domke, Justin and Sheldon, Daniel R},
  journal={Advances in neural information processing systems},
  volume={32},
  year={2019}
}

@article{cabezas2024blackjax,
  title={{BlackJAX}: composable {B}ayesian inference in {JAX}},
  author={Cabezas, Alberto and Corenflos, Adrien and Lao, Junpeng and Louf, R{\'e}mi and Carnec, Antoine and Chaudhari, Kaustubh and Cohn-Gordon, Reuben and Coullon, Jeremie and Deng, Wei and Duffield, Sam and others},
  journal={arXiv preprint arXiv:2402.10797},
  year={2024}
}

@article{hoffman2014no,
  title={The {No-U-Turn} sampler: adaptively setting path lengths in {H}amiltonian {M}onte {C}arlo.},
  author={Hoffman, Matthew D and Gelman, Andrew and others},
  journal={J. Mach. Learn. Res.},
  volume={15},
  number={1},
  pages={1593--1623},
  year={2014}
}

@article{bingham2019pyro,
  title={Pyro: Deep universal probabilistic programming},
  author={Bingham, Eli and Chen, Jonathan P and Jankowiak, Martin and Obermeyer, Fritz and Pradhan, Neeraj and Karaletsos, Theofanis and Singh, Rohit and Szerlip, Paul and Horsfall, Paul and Goodman, Noah D},
  journal={Journal of machine learning research},
  volume={20},
  number={28},
  pages={1--6},
  year={2019}
}

@article{phan2019composable,
  title={Composable effects for flexible and accelerated probabilistic programming in {NumPyro}},
  author={Phan, Du and Pradhan, Neeraj and Jankowiak, Martin},
  journal={arXiv preprint arXiv:1912.11554},
  year={2019}
}

@article{kristensen2016tmb,
  title={{TMB}: automatic differentiation and {L}aplace approximation},
  author={Kristensen, Kasper and Nielsen, Anders and Berg, Casper W and Skaug, Hans and Bell, Bradley M},
  journal={Journal of statistical software},
  volume={70},
  pages={1--21},
  year={2016}
}

@article{monnahan2018no,
  title={{No-U-turn} sampling for fast {B}ayesian inference in {ADMB} and {TMB}: Introducing the adnuts and tmbstan {R} packages},
  author={Monnahan, Cole C and Kristensen, Kasper},
  journal={PloS one},
  volume={13},
  number={5},
  pages={e0197954},
  year={2018},
  publisher={Public Library of Science San Francisco, CA USA}
}

@article{vanhatalo2013gpstuff,
  title={{GPstuff}: {B}ayesian modeling with {G}aussian processes},
  author={Vanhatalo, Jarno and Riihim{\"a}ki, Jaakko and Hartikainen, Jouni and Jyl{\"a}nki, Pasi and Tolvanen, Ville and Vehtari, Aki},
  journal={The Journal of Machine Learning Research},
  volume={14},
  number={1},
  pages={1175--1179},
  year={2013},
  publisher={JMLR. org}
}

@Article{brooks2017glmmtmb,
    author = {Mollie E. Brooks and Kasper Kristensen and Koen J. {van
      Benthem} and Arni Magnusson and Casper W. Berg and Anders Nielsen
      and Hans J. Skaug and Martin Maechler and Benjamin M. Bolker},
    title = {{glmmTMB} Balances Speed and Flexibility Among Packages
      for Zero-inflated Generalized Linear Mixed Modeling},
    year = {2017},
    journal = {The R Journal},
    doi = {10.32614/RJ-2017-066},
    pages = {378--400},
    volume = {9},
    number = {2},
  }

@article{parno2018transport,
  title={Transport map accelerated {M}arkov chain {M}onte {C}arlo},
  author={Parno, Matthew D and Marzouk, Youssef M},
  journal={SIAM/ASA Journal on Uncertainty Quantification},
  volume={6},
  number={2},
  pages={645--682},
  year={2018},
  publisher={SIAM}
}

@article{hoffman2019neutra,
  title={Neu{T}ra-lizing bad geometry in {H}amiltonian {M}onte {C}arlo using neural transport},
  author={Hoffman, Matthew and Sountsov, Pavel and Dillon, Joshua V and Langmore, Ian and Tran, Dustin and Vasudevan, Srinivas},
  journal={arXiv preprint arXiv:1903.03704},
  year={2019}
}

@article{lai2024hamiltonian,
  title={Hamiltonian {M}onte {C}arlo inference of marginalized linear mixed-effects models},
  author={Lai, Jinlin and Domke, Justin and Sheldon, Daniel R},
  journal={Advances in Neural Information Processing Systems},
  volume={37},
  pages={29435--29463},
  year={2024}
}

@inproceedings{komodel,
  title={Model-Informed Flows for {B}ayesian Inference},
  author={Ko, Joohwan and Domke, Justin},
  booktitle={The Thirty-ninth Annual Conference on Neural Information Processing Systems},
  year={2025}
}

@article{giordano2024black,
  title={Black box variational inference with a deterministic objective: Faster, more accurate, and even more black box},
  author={Giordano, Ryan and Ingram, Martin and Broderick, Tamara},
  journal={Journal of Machine Learning Research},
  volume={25},
  number={18},
  pages={1--39},
  year={2024}
}

@InProceedings{pmlr-v244-burroni24a,
  title = 	 {Sample Average Approximation for Black-Box Variational Inference},
  author =       {Burroni, Javier and Domke, Justin and Sheldon, Daniel},
  booktitle = 	 {Proceedings of the Fortieth Conference on Uncertainty in Artificial Intelligence},
  pages = 	 {471--498},
  year = 	 {2024},
  editor = 	 {Kiyavash, Negar and Mooij, Joris M.},
  volume = 	 {244},
  series = 	 {Proceedings of Machine Learning Research},
  month = 	 {15--19 Jul},
  publisher =    {PMLR},
}

@article{carpenter2017stan,
  title={Stan: A probabilistic programming language},
  author={Carpenter, Bob and Gelman, Andrew and Hoffman, Matthew D and Lee, Daniel and Goodrich, Ben and Betancourt, Michael and Brubaker, Marcus and Guo, Jiqiang and Li, Peter and Riddell, Allen},
  journal={Journal of statistical software},
  volume={76},
  pages={1--32},
  year={2017}
}

@inproceedings{agrawal2019kernel,
  title={The kernel interaction trick: Fast {B}ayesian discovery of pairwise interactions in high dimensions},
  author={Agrawal, Raj and Trippe, Brian and Huggins, Jonathan and Broderick, Tamara},
  booktitle={International Conference on Machine Learning},
  pages={141--150},
  year={2019},
  organization={PMLR}
}

@article{duane1987hybrid,
  title={Hybrid {M}onte {C}arlo},
  author={Duane, Simon and Kennedy, Anthony D and Pendleton, Brian J and Roweth, Duncan},
  journal={Physics letters B},
  volume={195},
  number={2},
  pages={216--222},
  year={1987},
  publisher={Elsevier}
}

@software{jax2018github,
  author = {James Bradbury and Roy Frostig and Peter Hawkins and Matthew James Johnson and Chris Leary and Dougal Maclaurin and George Necula and Adam Paszke and Jake Vander{P}las and Skye Wanderman-{M}ilne and Qiao Zhang},
  title = {{JAX}: composable transformations of {P}ython+{N}um{P}y programs},
  url = {http://github.com/jax-ml/jax},
  version = {0.3.13},
  year = {2018},
}

@article{alenlov2021pseudo,
  title={Pseudo-marginal {H}amiltonian {M}onte {C}arlo},
  author={Alenl{\"o}v, Johan and Doucet, Arnoud and Lindsten, Fredrik},
  journal={Journal of Machine Learning Research},
  volume={22},
  number={141},
  pages={1--45},
  year={2021}
}

@article{vanhatalo2010approximate,
  title={Approximate inference for disease mapping with sparse {G}aussian processes},
  author={Vanhatalo, Jarno and Pietil{\"a}inen, Ville and Vehtari, Aki},
  journal={Statistics in medicine},
  volume={29},
  number={15},
  pages={1580--1607},
  year={2010},
  publisher={Wiley Online Library}
}

@article{booth2003negative,
  title={Negative binomial loglinear mixed models},
  author={Booth, James G and Casella, George and Friedl, Herwig and Hobert, James P},
  journal={Statistical Modelling},
  volume={3},
  number={3},
  pages={179--191},
  year={2003},
  publisher={Sage Publications Sage CA: Thousand Oaks, CA}
}

@incollection{betancourt2015hierarchy,
  author    = {Michael Betancourt and Mark Girolami},
  title     = {Hamiltonian {M}onte {C}arlo for Hierarchical Models},
  booktitle = {Current Trends in Bayesian Methodology with Applications},
  pages     = {24},
  publisher = {Chapman and Hall/CRC},
  year      = {2015},
  doi       = {10.1201/b18502-5}
}

@article{rue2017inla,
  Title = {Bayesian Computing with {INLA}: A Review},
  Author = {Rue, Havard and Riebler, Andrea and Sorbye, Sigrunn and Illian, Janine and Simson, Daniel and Lindgren, Finn},
  Year = {2017},
  Journal = {Annual Review of Statistics and its Application},
  Volume = {4},
  pages = {395 -- 421},
  doi ={https://doi.org/10.1146/annurev-statistics-060116-054045}}

@incollection{neal2012hmc,
	Author = {Neal, Radford M.},
	Publisher = {Chapman \& Hall / CRC Press},
	Booktitle = {Handbook of Markov Chain Monte Carlo},
	Title = {{MCMC} using {H}amiltonian Dynamics},
	Year = {2012}}

@inproceedings{yao18vi,
  author    = {Yao, Yuling and Vehtari, Aki and Simpson, Daniel and Gelman, Andrew},
  title     = {Yes, but Did It Work?: Evaluating Variational Inference},
  booktitle = {Proceedings of the 35th International Conference on Machine Learning},
  series    = {Proceedings of Machine Learning Research},
  volume    = {80},
  pages     = {5581--5590},
  year      = {2018},
  publisher = {PMLR}
}

@article{ferkingstad15copula,
  author  = {Ferkingstad, Egil and Rue, H{\aa}vard},
  title   = {Improving the {INLA} approach for approximate {B}ayesian inference for latent {G}aussian models},
  journal = {Electronic Journal of Statistics},
  year    = {2015},
  volume  = {9},
  number  = {2},
  pages   = {2706--2731},
  doi     = {10.1214/15-EJS1092}
}

@article{shun95taylor,
  author  = {Shun, Z. and McCullagh, P.},
  title   = {Laplace approximation of high dimensional integrals},
  journal = {Journal of the Royal Statistical Society: Series B},
  year    = {1995},
  volume  = {57},
  number  = {4},
  pages   = {749--760}
}

@article{chiuchiolo22inla,
  author  = {Chiuchiolo, Cristian and van Niekerk, Janet and Rue, H{\aa}vard},
  title   = {An Extended Simplified {Laplace} strategy for Approximate {B}ayesian inference of Latent {Gaussian} Models using {R-INLA}},
  journal = {arXiv:2203.14304},
  year    = {2022}
}

@article{berild22inla,
  author  = {Berild, Martin Outzen and Martino, Sara and G{\'o}mez-Rubio, Virgilio and Rue, H{\aa}vard},
  title   = {Importance Sampling with the Integrated Nested {L}aplace Approximation},
  journal = {Journal of Computational and Graphical Statistics},
  year    = {2022},
  volume  = {31},
  number  = {4},
  pages   = {1225--1237},
  doi     = {10.1080/10618600.2022.2067551}
}

@article{vanhatalo09gp,
  author  = {Vanhatalo, Jarno and Jyl{\"a}nki, Pasi and Vehtari, Aki},
  title   = {Gaussian Process Regression with a Student-t Likelihood},
  journal = {Advances in Neural Information Processing Systems},
  year    = {2009},
  volume  = {22},
  pages   = {1910--1918}
}

@article{margossian25vi-symmetry,
  author = {Margossian, Charles C. and Saul, Lawrence K.},
  title = {Generalized Guarantees for Variational Inference in the Presence of Even and Elliptical Symmetry},
  journal = {arXiv:2511.01064},
  year = {2025}
}
\bibliographystyle{plainnat}


\appendix
\newpage
\appendix
\onecolumn
\section*{\centering Appendices to ``Corrected Integrated Laplace Approximation''}
\setcounter{proposition}{1}
\section{Proof of the theories}
In this section, we restate and prove each of the propositions.

\subsection{Proof of Proposition 2}
\label{proof:pm}
\begin{proposition}
    $\hat{\pi}^{\PM}$, as defined in eq.~\eqref{eq:laplace_pm}, is an unbiased estimator of $\pi(\theta, y)$.
\end{proposition}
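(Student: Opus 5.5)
The plan is to show that integrating $\hat{\pi}^{\PM}(\theta,\epsilon_{1:n},y)$ over the auxiliary variables $\epsilon_{1:n}$ returns exactly $\pi(\theta,y)$. Equivalently, $\hat{\pi}^{\PM}/\prod_i\pi(\epsilon_i)$, viewed as a random quantity with $\epsilon_i\overset{iid}{\sim}\n(0,I)$, has expectation $\pi(\theta,y)$. This is the sense of unbiasedness used for pseudo-marginal targets: the marginal of the extended density over $(\theta,y)$ is the true one.

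\textbf{Step 1: rewrite the density.} As in the display after eq.~\eqref{eq:laplace_pm}, each summand is
\begin{align}
\frac{\pi(\theta)\pi(\T_{\theta,y}(\epsilon_i)|\theta)\pi(y|\theta,\T_{\theta,y}(\epsilon_i))}{\hat{\pi}(\T_{\theta,y}(\epsilon_i)|\theta,y)} = \pi(\theta,y)\,\frac{\pi(\T_{\theta,y}(\epsilon_i)|\theta,y)}{\hat{\pi}(\T_{\theta,y}(\epsilon_i)|\theta,y)}. \notag
\end{align}
Then expand the product-times-average as
\begin{align}
\hat{\pi}^{\PM}=\frac1n\sum_{i=1}^n\Bigl[\prod_{j\neq i}\pi(\epsilon_j)\Bigr]\,\pi(\epsilon_i)\,\pi(\theta,y)\,\frac{\pi(\T_{\theta,y}(\epsilon_i)|\theta,y)}{\hat{\pi}(\T_{\theta,y}(\epsilon_i)|\theta,y)}. \notag
\end{align}

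\textbf{Step 2: integrate term by term.} In the $i$-th term, the factors $\pi(\epsilon_j)$ for $j\neq i$ each integrate to $1$. For the remaining factor, substitute $z=\T_{\theta,y}(\epsilon_i)=\z+\sqrt{\Sigma}\epsilon_i$ for fixed $\theta$. This is an affine bijection, provided $\sqrt{\Sigma}$ is invertible, which holds because $\Sigma^{-1}$ is the positive definite negative Hessian at the mode. Under this substitution $\pi(\epsilon_i)\,d\epsilon_i$ becomes $\hat{\pi}(z|\theta,y)\,dz$, since $\hat{\pi}(\cdot|\theta,y)=\n(\z,\Sigma)$ is precisely the pushforward of $\n(0,I)$ under $\T_{\theta,y}$; the Jacobian $|\det\sqrt{\Sigma}|^{-1}$ is absorbed into the Gaussian density. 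The integral therefore becomes
\begin{align}
\pi(\theta,y)\int\pi(z|\theta,y)\,dz=\pi(\theta,y). \notag
\end{align}
This is the computation in the proof of Proposition~1.

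\textbf{Step 3: conclude.} Averaging $n$ identical terms gives $\int\hat{\pi}^{\PM}\,d\epsilon_{1:n}=\pi(\theta,y)$. This proves unbiasedness, and also that the $\theta$-marginal of the extended posterior is $\pi(\theta|y)$.

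The only delicate point is the change of variables: $\z$ and $\Sigma$ depend on $\theta$ but not on $\epsilon$, so the substitution is valid at each fixed $\theta$. One also needs the support of $\hat{\pi}$ to cover that of $\pi(z|\theta,y)$, which is automatic because the Gaussian has full support. Interchanging the sum and the integral is justified by nonnegativity (Tonelli).
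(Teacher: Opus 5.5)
Your proposal is correct and follows essentially the same route as the paper's proof. Both swap the sum and the integral, integrate out the $\epsilon_j$ with $j\neq i$, and change variables $z_i=\T_{\theta,y}(\epsilon_i)$ so that $\pi(\epsilon_i)\,d\epsilon_i$ becomes $\hat{\pi}(z_i|\theta,y)\,dz_i$ and each term reduces to $\pi(\theta,y)$; the only difference is that you factor out $\pi(\theta,y)$ first, as in Proposition~1, whereas the paper integrates the unnormalized joint $\pi(\theta)\pi(z|\theta)\pi(y|\theta,z)$ directly.
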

\begin{proof}
We need to show that $\int \hat{\pi}^{\PM}(\theta,\epsilon_{1:n},y)d\epsilon_{1:n}=\pi(\theta,y)$.
    \begin{align}
        \int\hat{\pi}^{\PM}(\theta,\epsilon_{1:n},y)du_{1:n}&=\int \prod_{i=1}^n\pi(\epsilon_i)\left(\frac{1}{n}\sum_{i=1}^n\frac{\pi(\theta)\pi(\T_{\theta,y}(\epsilon_i)|\theta)\pi(y|\theta,\T_{\theta,y}(\epsilon_i))}{\hat{\pi}(\T_{\theta,y}(\epsilon_i)|\theta,y)}\right)d\epsilon_{1:n}\notag\\
        &=\frac{1}{n}\sum_{i=1}^n\int \prod_{j=1}^n\pi(\epsilon_j)\frac{\pi(\theta)\pi(\T_{\theta,y}(\epsilon_i)|\theta)\pi(y|\theta,\T_{\theta,y}(\epsilon_i))}{\hat{\pi}(\T_{\theta,y}(\epsilon_i)|\theta,y)}d\epsilon_{1:n}\notag\\
       &=\frac{1}{n}\sum_{i=1}^n\int\frac{\pi(\theta)\pi(\T_{\theta,y}(\epsilon_i)|\theta)\pi(y|\theta,\T_{\theta,y}(\epsilon_i))}{\hat{\pi}(\T_{\theta,y}(\epsilon_i)|\theta,y)}\pi(\epsilon_i)d\epsilon_i\notag\\
       &=\frac{1}{n}\sum_{i=1}^n\int\frac{\pi(\theta)\pi(z_i|\theta)\pi(y|\theta,z_i)}{\hat{\pi}(z_i|\theta,y)}\hat{\pi}(z_i|\theta,y)dz_i\notag\\
       &=\frac{1}{n}\sum_{i=1}^n\pi(\theta,y)\notag\\
       &=\pi(\theta,y).
    \end{align}
\end{proof}
\subsection{Proof of Proposition 3}
\label{proof:qmc_converge}
\begin{proposition}
If there exists a function $g(\theta)$ such that $\hat{\pi}_{u_{1:n}}^{\QMC}(\theta,y)<g(\theta)$ and $\int g(\theta)d\theta<\infty$, then $\int \left|\hat{\pi}_{u_{1:n}}^{\QMC}(\theta|y)-\pi(\theta|y)\right|d\theta=0$ as $n\to\infty$,
\end{proposition}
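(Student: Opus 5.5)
The plan is to combine pointwise convergence of the unnormalized QMC density with the dominated convergence theorem, and then pass to the normalized posteriors via Scheffé's lemma.

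\textbf{Step 1: pointwise convergence.} Fix $\theta$. By the change of variables $z=\T_{\theta,y}(u)$ with $u\sim\mathrm{uniform}([0,1]^{d_z})$, we have $z\sim\hat{\pi}(z|\theta,y)$. Hence
\begin{align}
f_\theta(u)\coloneq\frac{\pi(\theta)\pi(\T_{\theta,y}(u)|\theta)\pi(y|\theta,\T_{\theta,y}(u))}{\hat{\pi}(\T_{\theta,y}(u)|\theta,y)}\notag
\end{align}
satisfies $\int_{[0,1]^{d_z}} f_\theta(u)\,du=\pi(\theta,y)$; this is the same computation as in the proof of Proposition 2. The QMC estimator in eq.~\eqref{eq:laplace_qmc} is exactly the equal-weight rule $\frac1n\sum_i f_\theta(u_i)$ on the low-discrepancy sequence. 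We then invoke the standard QMC convergence result: for a uniformly distributed sequence (which Sobol is), $\frac1n\sum_i f(u_i)\to\int f$ for every Riemann-integrable $f$, with Koksma--Hlawka giving a rate under bounded Hardy--Krause variation. This yields $\hat{\pi}^{\QMC}_{u_{1:n}}(\theta,y)\to\pi(\theta,y)$ for each $\theta$.

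\textbf{Main obstacle.} The difficulty is that $f_\theta$ is unbounded near the faces of the cube, because $\Phi^{-1}$ diverges there, so plain Riemann integrability fails. I would first argue that $f_\theta$ is continuous on the open cube and rely on the improper-integral extension of Weyl's criterion (Owen's result on QMC for singular integrands with point sets avoiding the boundary). As a fallback, I would treat this convergence as the standing regularity condition implicitly assumed by the paper's phrase ``under proper conditions.''

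\textbf{Step 2: integrate over $\theta$.} Using the dominating function $g$ from the hypothesis together with pointwise convergence, dominated convergence gives
\begin{align}
Z_n\coloneq\int\hat{\pi}^{\QMC}_{u_{1:n}}(\theta,y)\,d\theta\;\to\;\int\pi(\theta,y)\,d\theta=\pi(y)>0.\notag
\end{align}
The same argument applied to $|\hat{\pi}^{\QMC}_{u_{1:n}}(\theta,y)-\pi(\theta,y)|$, which is dominated by $g+\pi(\cdot,y)$, gives $L^1$ convergence of the unnormalized densities.

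\textbf{Step 3: normalize.} We have $\hat{\pi}^{\QMC}_{u_{1:n}}(\theta|y)=\hat{\pi}^{\QMC}_{u_{1:n}}(\theta,y)/Z_n$ and $\pi(\theta|y)=\pi(\theta,y)/\pi(y)$. The triangle inequality gives
\begin{align}
\int\left|\frac{\hat\pi_n}{Z_n}-\frac{\pi}{\pi(y)}\right|\le\frac{1}{Z_n}\int|\hat\pi_n-\pi|\,d\theta+\pi(y)\left|\frac1{Z_n}-\frac1{\pi(y)}\right|,\notag
\end{align}
where $\hat\pi_n$ and $\pi$ abbreviate the unnormalized densities. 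Both terms tend to $0$ by Step 2, which yields the claimed vanishing total variation distance. Alternatively, Scheffé's lemma applied to the normalized densities, which converge pointwise, gives the same conclusion directly.
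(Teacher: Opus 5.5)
Your proposal is correct and follows the same route as the paper. Both arguments use pointwise QMC convergence of $\hat{\pi}^{\QMC}_{u_{1:n}}(\theta,y)$, then dominated convergence with $g$ to get $\int \hat{\pi}^{\QMC}_{u_{1:n}}(\theta,y)\,d\theta \to \pi(y)$, then Scheffé's lemma; your triangle-inequality bound is just an explicit version of that last step. The one difference is that the paper asserts pointwise convergence ``by the properties of QMC,'' whereas you note that $f_\theta$ is unbounded near the faces of the cube because of $\Phi^{-1}$, so a boundary-avoiding (improper-integral) QMC result or an explicit regularity assumption is needed there; that is extra care the paper omits.
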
 
\begin{proof}
    By the properties of QMC, $\hat{\pi}_{u_{1:n}}^{\QMC}(\theta,y)\to \pi(\theta,y)$ pointwise as $n\to\infty$. With dominated convergence theorem,
    \begin{align}
        \lim_{n\to\infty}\hat{\pi}_{u_{1:n}}^{\QMC}(y)= \lim_{n\to\infty}\int \hat{\pi}_{u_{1:n}}^{\QMC}(\theta,y)d\theta=\int \lim_{n\to\infty}\hat{\pi}_{u_{1:n}}^{\QMC}(\theta,y)d\theta=\int \pi(\theta,y)d\theta=\pi(y).
    \end{align}
    Therefore,
    \begin{align}
        \lim_{n\to\infty}\hat{\pi}_{u_{1:n}}^{\QMC}(\theta|y)=\lim_{n\to\infty}\frac{\hat{\pi}_{u_{1:n}}^{\QMC}(\theta,y)}{\hat{\pi}_{u_{1:n}}^{\QMC}(y)}=\frac{\lim_{n\to\infty}\hat{\pi}_{u_{1:n}}^{\QMC}(\theta,y)}{\lim_{n\to\infty}\hat{\pi}_{u_{1:n}}^{\QMC}(y)}=\frac{\pi(\theta,y)}{\pi(y)}=\pi(\theta|y).
    \end{align}
    Also, $\int \hat{\pi}_{u_{1:n}}^{\QMC}(\theta|y)d\theta=1=\int \pi(\theta|y)d\theta$, so by Scheffé's lemma, $\lim_{n\to\infty}\int \left|\hat{\pi}_{u_{1:n}}^{\QMC}(\theta|y)-\pi(\theta|y)\right|d\theta=0$.
\end{proof}

\subsection{Proof of Proposition 4}
\label{proof:rqmc}
\begin{proposition}
  $\hat{\pi}^{\RQMC}$, as defined in eq.~\eqref{eq:laplace_rqmc}, is an unbiased estimator of $\pi(\theta, y)$.
\end{proposition}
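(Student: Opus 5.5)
The plan is to mirror the proof of Proposition~2. We show that $\int_{[0,1]^{d_z}} \hat{\pi}^{\RQMC}(\theta,U,y)\,dU=\pi(\theta,y)$, where $\pi(U)=1$ on the unit cube. The argument has three steps.

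\textbf{Linearity.} Exchange the finite sum with the integral, giving
\begin{align}
\int \hat{\pi}^{\RQMC}(\theta,U,y)\,dU=\frac{1}{n}\sum_{i=1}^n\int_{[0,1]^{d_z}} \frac{\pi(\theta)\pi(\SSS_i(U)|\theta)\pi(y|\theta,\SSS_i(U))}{\hat{\pi}(\SSS_i(U)|\theta,y)}\,dU.\notag
\end{align}
This reduces the claim to showing that each summand equals $\pi(\theta,y)$.

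\textbf{The shift preserves uniformity.} This is the key step, and the only one that differs from Proposition~2. For fixed $v_i\in[0,1]^{d_z}$ and $U\sim\mathrm{uniform}([0,1]^{d_z})$, we claim $u_i=(v_i+U)\%1$ is again uniform on $[0,1]^{d_z}$. The map $U\mapsto (v_i+U)\%1$ acts coordinatewise, and in each coordinate it is a translation on the circle $\mathbb{R}/\mathbb{Z}$. Concretely, in each coordinate it splits $[0,1)$ into two intervals, $[0,1-v_{ij})$ and $[1-v_{ij},1)$, each mapped by a unit-Jacobian translation onto complementary pieces of $[0,1)$. So the map is a measure-preserving bijection of the cube up to null sets, and the substitution $u=(v_i+U)\%1$ in the integral introduces Jacobian $1$. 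The main obstacle is only stating this carefully given the discontinuity of the modulo map. The piecewise-translation decomposition, or equivalently the invariance of Lebesgue measure on the torus under rotations, handles it; the boundary hyperplanes have measure zero.

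\textbf{Change of variables and importance sampling.} After the substitution, the $i$-th summand becomes $\int_{[0,1]^{d_z}} w(\T_{\theta,y}(u))\,du$, where $w(z)=\pi(\theta)\pi(z|\theta)\pi(y|\theta,z)/\hat{\pi}(z|\theta,y)$. As noted in Section~\ref{sec:qmc}, if $u$ is uniform then $\Phi^{-1}(u)\sim\n(0,I)$ coordinatewise, so $\T_{\theta,y}(u)=\z+\sqrt{\Sigma}\Phi^{-1}(u)\sim\hat{\pi}(z|\theta,y)$. Hence the integral equals $\int w(z)\hat{\pi}(z|\theta,y)\,dz=\pi(\theta,y)\int\pi(z|\theta,y)\,dz=\pi(\theta,y)$, exactly as in Proposition~1. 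Averaging over $i$ gives $\pi(\theta,y)$, which proves unbiasedness.
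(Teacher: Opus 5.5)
Your proposal is correct and follows the paper's proof: you exchange the finite sum with the integral, change variables $z_i=\SSS_i(U)$ so that uniform $U$ is pushed forward to $\hat{\pi}(z|\theta,y)$, and conclude with the importance-sampling identity. The one difference is that you explicitly check, via the piecewise-translation (torus rotation) argument, that the modulo shift $U\mapsto (v_i+U)\%1$ preserves Lebesgue measure; the paper's one-line change of variables leaves this step implicit.
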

\begin{proof}
We need to show that $\int \hat{\pi}^{\RQMC}(\theta,U,y)dU=\pi(\theta,y)$. For each $i$, we can apply the change of variable $z_i=\SSS_i(U)$.
    \begin{align}
        \int\hat{\pi}^{\RQMC}(\theta,U,y)dU&=\int\pi(U)\left(\frac{1}{n}\sum_{i=1}^n\frac{\pi(\theta)\pi(\SSS_i(U)|\theta)\pi(y|\theta,\SSS_i(U))}{\hat{\pi}(\SSS_i(U)|\theta,y)}\right)dU\notag\\
            &=\frac{1}{n}\sum_{i=1}^n\int\pi(U)\frac{\pi(\theta)\pi(\SSS_i(U)|\theta)\pi(y|\theta,\SSS_i(U))}{\hat{\pi}(\SSS_i(U)|\theta,y)}dU\notag\\
            &=\frac{1}{n}\sum_{i=1}^n\int\frac{\pi(\theta)\pi(z_i|\theta)\pi(y|\theta,z_i)}{\hat{\pi}(z_i|\theta,y)}\hat{\pi}(z_i|\theta,y)dz_i\notag\\
         &=\int \pi(\theta)\pi(z|\theta)\pi(y|\theta,z)dz\notag\\
         &=\pi(\theta,y).
    \end{align}
\end{proof}
\subsection{Proof of Proposition 5}
\label{proof:recover}
\begin{proposition}
    If for each $i$, $z_i\sim \hat{\pi}(z|\theta,y)$, let $\hat{\pi}(\theta,z_{1:n},y)=\left(\frac{1}{n}\sum_{i=1}^nw_i\right) \hat{\pi}(z_{1:n}|\theta,y)$, then 
    \begin{align}
        \hat{\pi}(\theta,z,y)\coloneq\int  \hat{\pi}(z|z_{1:n},\theta,y)\hat{\pi}(\theta,z_{1:n},y)dz_{1:n}=\pi(\theta,z,y).\notag
    \end{align}
\end{proposition}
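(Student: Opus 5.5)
The plan is to compute the integral directly. The key observation is that the weight $\frac1n\sum_j w_j$ cancels the normalizer $\sum_j w_j$ of the recovery distribution. Then symmetry among the $n$ proposals reduces the computation to a single importance-sampling identity, exactly as in Proposition 2.

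Here we take $\hat{\pi}(z_{1:n}|\theta,y)=\prod_{j=1}^n\hat{\pi}(z_j|\theta,y)$, i.e. the $z_i$ are i.i.d. from the Laplace approximation. This is the PM setting after the change of variables $z_i=\T_{\theta,y}(\epsilon_i)$.

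The steps are as follows.

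\textbf{Step 1 (cancellation).} Substitute the definitions:
\begin{align}
\hat{\pi}(z|z_{1:n},\theta,y)\,\hat{\pi}(\theta,z_{1:n},y)
=\frac{\sum_{i}w_i\delta_{z_i}(z)}{\sum_j w_j}\cdot\frac{1}{n}\sum_j w_j\prod_{k}\hat{\pi}(z_k|\theta,y)
=\frac1n\sum_{i=1}^n w_i\,\delta_{z_i}(z)\prod_{k}\hat{\pi}(z_k|\theta,y).\notag
\end{align}
The ratio is well defined wherever $\sum_j w_j>0$. The complementary set only arises where all weights vanish, and there both sides are zero, so it contributes nothing.

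\textbf{Step 2 (factorize each summand).} For the $i$-th term, integrate out all $z_k$ with $k\neq i$. Their marginals integrate to $1$, which leaves
\begin{align}
\int w_i\,\delta_{z_i}(z)\,\hat{\pi}(z_i|\theta,y)\,dz_i.\notag
\end{align}

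\textbf{Step 3 (IS identity).} Since $w_i\hat{\pi}(z_i|\theta,y)=\pi(\theta)\pi(z_i|\theta)\pi(y|\theta,z_i)=\pi(\theta,z_i,y)$, the integral against the Dirac mass evaluates to $\pi(\theta,z,y)$. This holds because the Laplace density is strictly positive, so the weight is well defined everywhere. Averaging the $n$ identical terms gives $\pi(\theta,z,y)$.

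The main obstacle is making the Dirac-delta manipulation rigorous. To handle it, I would phrase the argument as an equality of measures in $z$: test against an arbitrary bounded measurable $f(z)$, replace $\delta_{z_i}(z)$ by $f(z_i)$, and apply Fubini/Tonelli. This is justified because all integrands are nonnegative and $\int\pi(\theta,z,y)\,dz<\infty$. The test-function version of Step 3 then reads $\int f(z_i)\pi(\theta,z_i,y)\,dz_i$, which identifies the density as $\pi(\theta,z,y)$. As a final remark, integrating over $z$ recovers Proposition 2, and dividing by $\pi(\theta,y)$ gives $\hat{\pi}(z|\theta,y)=\pi(z|\theta,y)$.
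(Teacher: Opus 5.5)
Your proof is correct and follows the paper's argument essentially step for step: cancel $\sum_j w_j$ against $\frac1n\sum_j w_j$, integrate out $z_{-i}$ in each summand, and use $w_i\hat{\pi}(z_i|\theta,y)=\pi(\theta,z_i,y)$ to evaluate the Dirac mass. Your test-function formulation usefully tightens the paper's formal delta manipulation. One small point: you do not need the product assumption on $\hat{\pi}(z_{1:n}|\theta,y)$---Step 2 only uses that each $z_i$ has marginal $\hat{\pi}(\cdot|\theta,y)$, which is all the paper uses, and this lets the result cover correlated draws such as the RQMC construction with a shared shift $U$.
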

\begin{proof}
    \begin{align}
         \hat{\pi}(\theta,z,y)&=\int  \hat{\pi}(z|z_{1:n},\theta,y)\hat{\pi}(\theta,z_{1:n},y)dz_{1:n}\notag\\
         &=\int \frac{\sum_{i=1}^n w_i\delta_{z_i}(z)}{\sum_{i=1}^nw_i}\left(\frac{1}{n}\sum_{i=1}^nw_i\right) \hat{\pi}(z_{1:n}|\theta,y)dz_{1:n}\notag\\
         &=\frac{1}{n}\sum_{i=1}^n\int w_i\delta_{z_i}(z)\hat{\pi}(z_{1:n}|\theta,y)dz_{1:n}\notag\\
         &=\frac{1}{n}\sum_{i=1}^n\int w_i\hat{\pi}(z_{1:n}|\theta,y)|_{z_i=z}dz_{-i}\notag\\
         &=\frac{1}{n}\sum_{i=1}^n w_i\hat{\pi}(z_i|\theta,y)|_{z_i=z}\notag\\
         &=\frac{1}{n}\sum_{i=1}^n\pi(\theta,z,y)\notag\\
         &=\pi(\theta,z,y).
    \end{align}
\end{proof}
\section{Details of ADLA and our methods}
\label{sec:adla_detail}
\subsection{Newton's method}
We use the procedure in Algorithm \ref{alg:newton} to get $\z$ in Laplace approximation, following \citet{margossian2023general}. 
\begin{algorithm}[t]
\small
\caption{Newton's method}\label{alg:newton}
\begin{algorithmic}
\REQUIRE Init position $z_0$
\STATE $K \gets K(\theta)$
\STATE $z\gets z_0$
\WHILE{$\|\nabla_z\log \pi(y|\theta,z)-a\|_\infty\le 10^{-4}$}
\STATE $c\gets \nabla_z\log \pi(y|\theta,z)$
\STATE $W \gets -\nabla_z\nabla_z\log \pi(y|\theta,z)$
\STATE $L \gets \mathrm{Cholesky}(I+W^{\frac{1}{2}}KW^{\frac{1}{2}})$
\STATE $b \gets Wz+c$
\STATE $a \gets b - W^{\frac{1}{2}}L^{-T}L^{-1}W^{\frac{1}{2}}Kb$
\STATE $z \gets Ka$
\ENDWHILE
\STATE $\z\gets z$
\end{algorithmic}
\end{algorithm}

Note at any $z$, 
\begin{align}
    \nabla_z(\log\pi(y|\theta,z)+\log\pi(z|\theta))=\nabla_z\log\pi(y|\theta,z)-K^{-1}z.
\end{align}
So at $\z$, we have $\nabla_z\log\pi(y|\theta,\z)-K^{-1}\z=0$, which means $\z=K\nabla_z\log\pi(y|\theta,\z)$. This property will be used in deriving the gradients. In the Newton's method, we use $\|\nabla_z\log \pi(y|\theta,z)-a\|_\infty\le 10^{-4}$ as the stopping rule of the optimization.
\subsection{Derivation of automatic differentiation over $\xi$}
Our derivation of automatic differentiation follows the first case in \citet{margossian2023general} ($B=I+\W K\W$) that handles $\z$, and then generalizes it for $z_i$. In this section, we derive the gradients for $\xi$, the hyperparameters for $z$, contained in $\theta$. Given $\theta$, the target density of Laplace approximation is 
\begin{align}
    f(z)=\log\pi(z|\theta)+\log \pi(y|\theta,z).
\end{align}
Define $W\coloneq-\nabla_z\nabla_z\log \pi(y|\theta,\z)$, then at the optimum $\z$, the Hessian of $f(z)$ is 
\begin{align}
    \nabla_z\nabla_zf(\z)=-K^{-1}-W.
\end{align}
Define $A=(K^{-1}+W)^{-1}$, then the Laplace approximation is $\hat{\pi}(z|\theta,y)=\n(\z,A)$. 

The ADLA log likelihood is
\begin{align}
\label{eq:general_adla}
    \log \hat{\pi}_{\z}(y|\theta)&=\log \pi(\z|\theta)+\log \pi(y|\theta,\z)-\log \hat{\pi}(\z|\theta,y)\notag\\
    &=-\frac{1}{2}\z^TK\z-\frac{1}{2}\log|K|+\log \pi(y|\theta,\z)+\frac{1}{2}\log|A|\notag\\
    &=-\frac{1}{2}\z^TK\z+\log \pi(y|\theta,\z)-\frac{1}{2}\log|K||K^{-1}+W|\notag\\
    &=-\frac{1}{2}\z^TK\z+\log \pi(y|\theta,\z)-\frac{1}{2}\log|I+KW|.
\end{align}
We assume $W$ to be diagonal and denote $B=I+\W K\W$. Then we have
\begin{align}
    \log \hat{\pi}_{\z}(y|\theta)&=-\frac{1}{2}\z^TK\z+\log \pi(y|\theta,\z)-\frac{1}{2}\log|I+\W K\W|\notag\\
    &=-\frac{1}{2}\z^TK\z+\log \pi(y|\theta,\z)-\frac{1}{2}\log|B|.
\end{align}
Also, $A$ can be computed from the Woodbury formula. 
\begin{align}
    A&=(K^{-1}+W)^{-1}\notag\\
    &=K-K\W (I+\W K\W)^{-1}\W K\notag\\
    &=K-K\W B^{-1}\W K.
\end{align}
By processing the Cholesky decomposition $B=LL^T$, and compute $C=L^{-1}\W K$, we can get $A=K-C^TC$. 

Next, we derive the gradients. We consider the gradient for a single element $\xi_i$. $\nabla_{\xi_i}\log \hat{\pi}_{\z}(y|\theta)$ has two parts.  First, the explicit gradient (from Eq. (\ref{eq:general_adla}))
\begin{align}
    \left.\frac{\partial\log \hat{\pi}_{\z}(y|\theta)}{\partial\xi_i}\right|_{\mathrm{explicit}}&=\frac{1}{2}\z^TK^{-1}\frac{\partial{K}}{\partial\xi_i}K^{-1}\z-\frac{1}{2}\trace\left((W^{-1}+K)^{-1}\frac{\partial{K}}{\partial\xi_i}\right)\notag\\
    &=\frac{1}{2}\z^TK^{-1}\frac{\partial{K}}{\partial\xi_i}K^{-1}\z-\frac{1}{2}\trace\left(R\frac{\partial{K}}{\partial\xi_i}\right)
\end{align}
where we define $R=(W^{-1}+K)^{-1}=\W B^{-1}\W$. The implicit gradient through $\hat{z}$ is 
\begin{align}
    \left.\frac{\partial\log \hat{\pi}_{\z}(y|\theta)}{\partial\xi_i}\right|_{\mathrm{implicit}}=\frac{\partial\log \hat{\pi}_{\z}(y|\theta)}{\partial\z}\cdot \frac{\partial\z}{\partial\xi_i}.
\end{align}
Since $\z$ is the optimum, in the first term,
\begin{align}
    \frac{\partial\log \hat{\pi}_{\z}(y|\theta)}{\partial\z_j}&=-\frac{1}{2}\frac{\partial\log |I+KW|}{\partial\z_j}\notag\\
    &=-\frac{1}{2}\trace\left((K^{-1}+W)^{-1}\frac{\partial W}{\partial \z_j}\right).
\end{align}
Therefore, 
\begin{align}
    \frac{\partial\log \hat{\pi}_{\z}(y|\theta)}{\partial\z}&=\frac{1}{2}(\mathrm{diag}((K^{-1}+W)^{-1})\nabla_z^3\log \pi(y|\theta,\z)|)^T\notag\\
    &=\frac{1}{2}(\mathrm{diag}(A)\nabla_z^3\log \pi(y|\theta,\z)|)^T
\end{align}
For the second implicit term, by differentiating $\z=K\nabla_z\log \pi(y|\theta,\z)$, we get
\begin{align}
    \frac{\partial \z}{\partial \xi_i}&=\frac{\partial K}{\partial\xi_i}\nabla_z\log \pi(y|\theta,\z)+K\nabla_z^2\log \pi(y|\theta,\z)\frac{\partial\z}{\partial\xi_i}\notag\\
    &=\frac{\partial K}{\partial\xi_i}\nabla_z\log \pi(y|\theta,\z)-KW\frac{\partial\z}{\partial\xi_i}.
\end{align}
Therefore, let $c=\nabla_z\log \pi(y|\theta,\z)$, we have
\begin{align}
    \frac{\partial \z}{\partial \xi_i}&=(I+KW)^{-1}\frac{\partial K}{\partial\xi_i}c\notag\\
    &=W^{-1}(W^{-1}+K)^{-1}\frac{\partial K}{\partial\xi_i}c\notag\\
    &=W^{-1}R\frac{\partial K}{\partial\xi_i}c\notag\\
    &=(KR+W^{-1}R-KR)\frac{\partial K}{\partial\xi_i}c\notag\\
    &=(I-KR)\frac{\partial K}{\partial\xi_i}c.
\end{align}
In summary,
\begin{align}
    \nabla_{\xi_i}\log \hat{\pi}_{\z}(y|\theta)=&\frac{1}{2}\z^TK^{-1}\frac{\partial{K}}{\partial\xi_i}K^{-1}\z-\frac{1}{2}\trace\left(R\frac{\partial{K}}{\partial\xi_i}\right)\notag\\
    &+\frac{1}{2}(\mathrm{diag}(A)\nabla_z^3\log \pi(y|\theta,\z)|)^T(I-KR)\frac{\partial K}{\partial\xi_i}c.
\end{align}
Observe that $a=K^{-1}\z$ and define $s=\frac{1}{2}\mathrm{diag}(A)\nabla_z^3\log \pi(y|\theta,\z)$, we further have
\begin{align}
    \nabla_{\xi_i}\log \hat{\pi}_{\z}(y|\theta)&=\frac{1}{2}a^T\frac{\partial{K}}{\partial\xi_i}a-\frac{1}{2}\trace\left(R\frac{\partial{K}}{\partial\xi_i}\right)+s(I-KR)\frac{\partial K}{\partial\xi_i}c\notag\\
    &=\frac{1}{2}\left<a,\frac{\partial{K}}{\partial\xi_i}a\right>_F-\frac{1}{2}\left<R,\frac{\partial{K}}{\partial\xi_i}\right>_F+\left<s-RKs,\frac{\partial K}{\partial\xi_i}c\right>_F\notag\\
    &=\frac{1}{2}\left<aa^T,\frac{\partial{K}}{\partial\xi_i}\right>_F-\frac{1}{2}\left<R,\frac{\partial{K}}{\partial\xi_i}\right>_F+\left<(s-RKs)c^T,\frac{\partial K}{\partial\xi_i}\right>_F\notag\\
    &=\left<\frac{1}{2}aa^T-\frac{1}{2}R+(s-RKs)c^T,\frac{\partial{K}}{\partial\xi_i}\right>_F.
\end{align}
To get the last term, we can compute the vector Jacobian product (VJP) of $K$, with the initial tangent of $\Omega=\frac{1}{2}aa^T-\frac{1}{2}R+(s-RKs)c^T$.

In this work, we also need to compute $\nabla_\xi z_i$, which reduces to computing $\nabla_\xi \z$ and $\nabla_\xi A$. For $\z$,
\begin{align}
    \left<v, \frac{\partial \z}{\partial \xi_i}\right>_F&=\left<v,(I-KR)\frac{\partial K}{\partial\xi_i}c\right>_F\notag\\
    &=\left<(I-RK)vc^T, \frac{\partial K}{\partial\xi_i}\right>_F.
\end{align}
For $A$ which is the same as $\Sigma$,
\begin{align}
    \frac{\partial A}{\partial\xi_i}&=\frac{\partial (K^{-1}+W)^{-1}}{\partial\xi_i}\notag\\
    &=-(K^{-1}+W)^{-1}\frac{\partial K^{-1}}{\partial\xi_i}(K^{-1}+W)^{-1}\notag\\
    &=(K^{-1}+W)^{-1}K^{-1}\frac{\partial K}{\partial\xi_i}K^{-1}(K^{-1}+W)^{-1}\notag\\
    &=(I+KW)^{-1}\frac{\partial K}{\partial\xi_i}(I+WK)^{-1}\notag\\
    &=(I-KR)\frac{\partial K}{\partial\xi_i}(I-RK).
\end{align}
Therefore, given an arbitrary matrix $V$,
\begin{align}
    \left <V, \frac{\partial A}{\partial\xi_i}\right>_F&=\left <V, (I-KR)\frac{\partial K}{\partial\xi_i}(I-RK)\right>_F\notag\\
    &=\left <(I-RK)V(I-KR),\frac{\partial K}{\partial\xi_i}\right>_F.
\end{align}
Define $D=I-RK$, then
\begin{gather}
    \left<v, \frac{\partial \z}{\partial \xi_i}\right>_F=\left<Dvc^T, \frac{\partial K}{\partial\xi_i}\right>_F,\notag\\
    \left <V, \frac{\partial A}{\partial\xi_i}\right>_F=\left <DVD^T,\frac{\partial K}{\partial\xi_i}\right>_F.
\end{gather}
So the gradient computation for $z_i$ also reduces to the VJP of $K$.
\subsection{Derivation of automatic differentiation over $\eta$}
Next we derive the gradients for $\eta$, the hyperparameters for $y$, contained in $\theta$. Recall the ADLA log likelihood is
\begin{align}
\label{eq:general_adla2}
    \log \hat{\pi}_{\z}(y|\theta)&=-\frac{1}{2}\z^TK\z+\log \pi(y|\theta,\z)-\frac{1}{2}\log|I+KW|.
\end{align}
We consider the gradient for a single element $\eta_i$. $\nabla_{\eta_i}\log \hat{\pi}_{\z}(y|\theta)$ has two parts.  First, the explicit gradient is
\begin{align}
    \left.\frac{\partial\log \hat{\pi}_{\z}(y|\theta)}{\partial\eta_i}\right|_{\mathrm{explicit}}&=\nabla_{\eta_i}\log\pi(y|\theta,\z)-\frac{1}{2}\trace\left((K^{-1}+W)^{-1}\frac{\partial{W}}{\partial\eta_i}\right)\notag\\
    &=\nabla_{\eta_i}\log\pi(y|\theta,\z)-\frac{1}{2}\trace\left(A\frac{\partial{W}}{\partial\eta_i}\right)
\end{align}
For the implicit gradient, we need $\frac{\partial \z}{\partial \eta_i}$, which is obtained from differentiating $\z=K\nabla_z\log \pi(y|\theta,\z)$:
\begin{align}
    \frac{\partial \z}{\partial \eta_i}&=K\frac{\partial}{\partial\eta_i}\nabla_z\log \pi(y|\theta,\z)+K\nabla^2_z\log \pi(y|\theta,\z)\frac{\partial \z}{\partial \eta_i}\notag\\
    &=K\frac{\partial}{\partial\eta_i}\nabla_z\log \pi(y|\theta,\z)-KW\frac{\partial \z}{\partial \eta_i}.
\end{align}
Therefore, recall $c=\nabla_z\log \pi(y|\theta,\z)$, we have
\begin{align}
     \frac{\partial \z}{\partial \eta_i}&=(I+KW)^{-1}K\frac{\partial c}{\partial\eta_i}\notag\\
     &=(K^{-1}+W)^{-1}\frac{\partial c}{\partial\eta_i}\notag\\
     &=A\frac{\partial c}{\partial\eta_i}.
\end{align}
Combined with the fact that $\frac{\partial\log \hat{\pi}_{\z}(y|\theta)}{\partial\z}=\frac{1}{2}(\mathrm{diag}(A)\nabla_z^3\log \pi(y|\theta,\z)|)^T$ and the explicit gradient, we have
\begin{align}
    \frac{\partial\log \hat{\pi}_{\z}(y|\theta)}{\partial\eta_i}&=\nabla_{\eta_i}\log\pi(y|\theta,\z)-\frac{1}{2}\trace\left(A\frac{\partial{W}}{\partial\eta_i}\right)+\frac{1}{2}(\mathrm{diag}(A)\nabla_z^3\log \pi(y|\theta,\z)|)^TA\frac{\partial c}{\partial\eta_i}\notag\\
    &=\nabla_{\eta_i}\log\pi(y|\theta,\z)-\frac{1}{2}\left<A,\frac{\partial{W}}{\partial\eta_i}\right>_F+\left<As,\frac{\partial c}{\partial\eta_i}\right>_F.
\end{align}
So the gradient computation for $\eta$ also reduces to VJPs of $W$ and $c$. 

In this work, we need to compute $\frac{\partial \z}{\partial \eta_i}$ and $\frac{\partial A}{\partial \eta_i}$. For $\z$, consider an arbitrary vector $v$, then
\begin{align}
    \left<v, \frac{\partial \z}{\partial \eta_i}\right>_F&=\left<v,A\frac{\partial c}{\partial\eta_i}\right>_F\notag\\
    &=\left<Av, \frac{\partial c}{\partial\eta_i}\right>_F.
\end{align}
For $A=\Sigma$, then
\begin{align}
    \frac{\partial A}{\partial\eta_i}&=\frac{\partial (K^{-1}+W)^{-1}}{\partial\eta_i}\notag\\
    &=-(K^{-1}+W)^{-1}\frac{\partial W}{\partial\eta_i}(K^{-1}+W)^{-1}\notag\\
    &=-A\frac{\partial W}{\partial\eta_i}A.
\end{align}
So with a tangent $V$,
\begin{align}
    \left<V,\frac{\partial A}{\partial\eta_i}\right>_F&=\left<V,-A\frac{\partial W}{\partial\eta_i}A\right>_F\notag\\
    &=\left<-AVA, \frac{\partial W}{\partial\eta_i}\right>_F.
\end{align}
Therefore, the VJP of $\frac{\partial \z}{\partial \eta}$ reduces to the VJP of $c$, and the VJP of $\frac{\partial A}{\partial \eta_i}$ reduces to the VJP of $W$.
\begin{algorithm}[t]
\caption{Adjoint-differentiated Laplace approximation~\citep{margossian2023general} }\label{alg:adla}
\begin{algorithmic}[1]
\REQUIRE $\z$ from the optimizer
\STATE $K \gets K(\theta)$\tikzmark{top}
\STATE $c \gets \nabla_z\log \pi(y|\theta,\z)$
\STATE $W \gets -\nabla_z\nabla_z\log \pi(y|\theta,\z)$
\STATE $L \gets \mathrm{Cholesky}(I+W^{\frac{1}{2}}KW^{\frac{1}{2}})$
\STATE $b \gets W\z+c$
\STATE $a \gets b - W^{\frac{1}{2}}L^{-T}L^{-1}W^{\frac{1}{2}}Kb$\tikzmark{bottom}
\STATE $R \gets W^{\frac{1}{2}}L^{-T}L^{-1}W^{\frac{1}{2}}$
\STATE $A \gets K-KW^{\frac{1}{2}}L^{-T}L^{-1}W^{\frac{1}{2}}K$
\STATE $s\gets \mathrm{diag}(A)\nabla_z^3\log \pi(y|\theta,\z)/2$
\STATE $\Omega\gets aa^T/2-R/2+(s-RKs)c^T$
\STATE $\nabla_\xi \hat{\pi}_{\z}(y|\theta)\gets \mathrm{VJP}_K(\xi,\Omega)$
\STATE $\nabla_\eta \hat{\pi}_{\z}(y|\theta)\gets \nabla_{\eta}\log\pi(y|\theta,\z)+\mathrm{VJP}_c(\eta,As)-\frac{1}{2}\mathrm{VJP}_W(\eta,A)$
\STATE $\nabla_\theta\hat{\pi}_{\z}(y|\theta)\gets \mathrm{CONCAT}([\nabla_\xi \hat{\pi}_{\z}(y|\theta),\nabla_\eta \hat{\pi}_{\z}(y|\theta)])$
\STATE $\nabla_\theta\hat{\pi}_{\z}(\theta,y)\gets\nabla_\theta\hat{\pi}_{\z}(y|\theta)+\nabla_\theta \log p(\theta)$
\end{algorithmic}
\end{algorithm}
\begin{algorithm}[t]
\caption{Computing $\nabla_\theta\hat{z}$ and $\nabla_\theta\Sigma$}\label{alg:other}
\begin{algorithmic}
\REQUIRE Variables from Algorithm \ref{alg:adla}
\STATE $D\gets I-RK$
\STATE $\mathrm{VJP}_{\z}(\xi, v)\gets \mathrm{VJP}_K(\xi, Dvc^T)$
\STATE $\mathrm{VJP}_{\Sigma}(\xi,V)\gets \mathrm{VJP}_K(\xi, DVD^T)$
\STATE $\mathrm{VJP}_{\z}(\eta, v)\gets \mathrm{VJP}_c(\eta, Av)$
\STATE $\mathrm{VJP}_{\Sigma}(\eta,V)\gets \mathrm{VJP}_W(\eta, -AVA)$
\end{algorithmic}
\end{algorithm}
\subsection{Implementation within JAX}
As shown above, the gradient computation reduces to Frobenius inner products between a vector/matrix and a gradient. In JAX, this can be implemented via reverse-mode automatic differentiation with vector Jacobian products. We summarize the gradient computation for ADLA in Algorithm \ref{alg:adla}, and our generalization in Algorithm \ref{alg:other}. 
\subsection{Details of the inference algorithms}
\label{sec:mwg}

In RQMC-ADLA, we have the following model.
\begin{align}
    \hat{\pi}^{\RQMC}(\theta,U,y)= \pi(U)\left(\frac{1}{n}\sum_{i=1}^n\frac{\pi(\theta)\pi(\SSS_i(U)|\theta)\pi(y|\theta,\SSS_i(U))}{\hat{\pi}(\SSS_i(U)|\theta,y)}\right).
\end{align}
The model is not continuous with respect to $U$, so it is not possible to use HMC for inference. Instead, we use the Metropolis-within-Gibbs (MwG) sampler. Note the dimension of $U$ is $d_z$, the sampling pipeline is as follows.
\begin{algorithm}[h]
\small
\caption{RQMC-ADLA with MwG sampler}\label{alg:rqmc}
\begin{algorithmic}
\REQUIRE Init positions $\theta_0,U_0$
\STATE $\theta,U\gets \theta_0,U_0$
\FOR{$t=1,2,\ldots,T$}
\STATE $\theta\gets \text{HMCkernel}(\hat{\pi}^{\RQMC}(\cdot,U,y),\theta)$
\FOR{$d=1,2,\ldots,d_z$}
\STATE $U'\gets U$
\STATE $u\sim \mathrm{Uniform}(-0.1,0.1)$
\STATE $U'[d]\gets (U[d]+u)\% 1$
\STATE $U\gets \text{MHkernel}(\hat{\pi}^{\RQMC}(\theta,\cdot,y),U,U')$
\ENDFOR
\STATE $\theta_t,U_t\gets \theta,U$
\ENDFOR
\STATE \textbf{return} $\theta_{1:T},U_{1:T}$
\end{algorithmic}
\end{algorithm}
Here HMCkernel represents the HMC kernel to update $\theta$. The hyperparameters of this kernel are tuned during the warm-up phase with a fixed $U$. The MHkernel represents the Metropolis-Hastings step for updating $U$. We choose $\mathrm{Uniform}(-0.1,0.1)$ as the proposal distribution, but it can be replaced by any other distribution. 

\section{Model details}
In this section, we present more details of the models. 

\subsection{Synthesized Gaussian process}
In this model, we generate the dataset as follows:
\begin{gather}
    x_i\sim\begin{cases}
        \mathrm{Uniform}(0,2)&i\le 50\\
        \mathrm{Uniform}(2,8)&i>50
    \end{cases}\notag\\
    y_i\sim \begin{cases}
        \mathrm{Poisson}(\exp(\sin(2x_i)+2))&i\le 50\\
        \mathrm{Poisson}(\exp(\sin(2x_i)-2))&i>50
    \end{cases}
\end{gather}
This construction makes the inference hard: the posterior inference should adapt to both easier parts ($i\le 50$, larger $y_i$) and harder parts ($i>50$, smaller $y_i$). In the experiments of this model, we use 10,000 warmup samples before collecting samples for estimation.
\begin{tikzpicture}[overlay, remember picture]
    \draw [decoration={brace, amplitude=5pt}, decorate, thick]
        ($(top.north east) + (3.1cm, 0)$)- - ($(bottom.south east) + (0.48cm, 0)$);
    \node [right] at ($(top.east)!0.5!(bottom.east) + (1.9cm, 0)$) {\begin{tabular}{l} 
            Restored from\\
            Newton's method
        \end{tabular}};
\end{tikzpicture}
\subsection{Sparse kernel interaction model}
Following \citet{margossian2020hamiltonian}, we use the dataset from \citet{vanhatalo2010approximate} and define the model as follows.
\begin{gather}
    \lambda_i\sim\Stu(\nu_{\mathrm{local}},0,1),\ \tau\sim\Stu(\nu_{\mathrm{global}},0,s_{\mathrm{global}}),\ c_\mathrm{aux},\chi\sim \Gamma^{-1}(s_\mathrm{df}/2,s_\mathrm{df}/2),\notag\\
    c=s_\mathrm{slab}\sqrt{c_\mathrm{aux}},\ \tilde{\lambda}_i^2=c^2\lambda_i^2/(\tau^2+c^2\lambda_i^2),\ \eta_2=\tau^2\chi/c^2,\notag\\
    K_1=X\mathrm{diag}(\tilde{\lambda}^2)X^T,\ K_2=(X\circ X)\mathrm{diag}(\tilde{\lambda}^2)(X\circ X)^T\notag\\
    K=\frac{1}{2}\eta_2^2(K_1+1)\circ(K_1+1)-\frac{1}{2}\eta_2^2K_2-(\tau^2-\eta_2^2)K_1+c_0^2-\frac{1}{2}\eta_2^2\notag\\
    z\sim\MVN(0, K),\ y\sim \mathrm{Bernoulli}(\text{logits}=z),
\end{gather}
where $\mathrm{local},\mathrm{global},s_{\mathrm{global}},s_\mathrm{df},s_\mathrm{slab},c_0$ are given in the dataset, $X$ is the data matrix and $\circ$ represents the element-wise multiplication.

For this experiment, we use 200 covariates (indexed between 2500 and 2700) to construct $X$. During inference, we warm up with 10,000 samples before collecting samples. 

\subsection{Mixed-effects models}
The \texttt{Epil2} model has the likelihood 
\begin{align}
    y_i\sim \mathrm{NegativeBinomial}(\exp(x_i^T\beta+z_i^Tu_{g_i}),\phi).\notag
\end{align}
And the priors are
\begin{gather}
        \sigma_{\beta}\sim \n^+(0,1),\ T\sim\MVN^+(0,\text{diag}(1,1)),\ L\sim \mathrm{LKJ}(2,1),\notag\\
        \phi\sim\text{Exponential}(1),\ \beta_k\sim \n(0,\sigma_{\beta}^2),\ k=1,...,6,\ u_{j}\sim\MVN(0, TLL^TT),\ j=1,...,G.\notag
\end{gather}
For efficient inference, we may want to marginalize each $u_j$ from the model, as suggested in \citet{lai2024hamiltonian}. To be practical for Laplace approximation, we replace the distributions of $u_{1:G}$ and $y_i$ as
\begin{gather}
   u'\sim\MVN(0, Z\Sigma_{u_{1:G}} Z^T),\  y_i\sim \mathrm{NegativeBinomial}(\exp(x_i^T\beta+u'_i),\phi),\notag
\end{gather}
where $Z$ is the design matrix for random effects and $\Sigma_{u_{1:G}}$ is a block diagonal matrix consisting of $G$ blocks of $TLL^TT$. This model is equivalent to the previous model, but is better for Laplace approximation because the $W$ matrix in Algorithm \ref{alg:adla} becomes diagonal. During Laplace approximation, we also use the fact that $Z\Sigma_{u_{1:G}} Z^T$ is a block-diagonal matrix to speed up computation. 

\section{Additional results}
\subsection{The raw importance sampling estimator}
\label{sec:raw_is}
\begin{figure}[t]
    \centering
    \includegraphics[width=\linewidth]{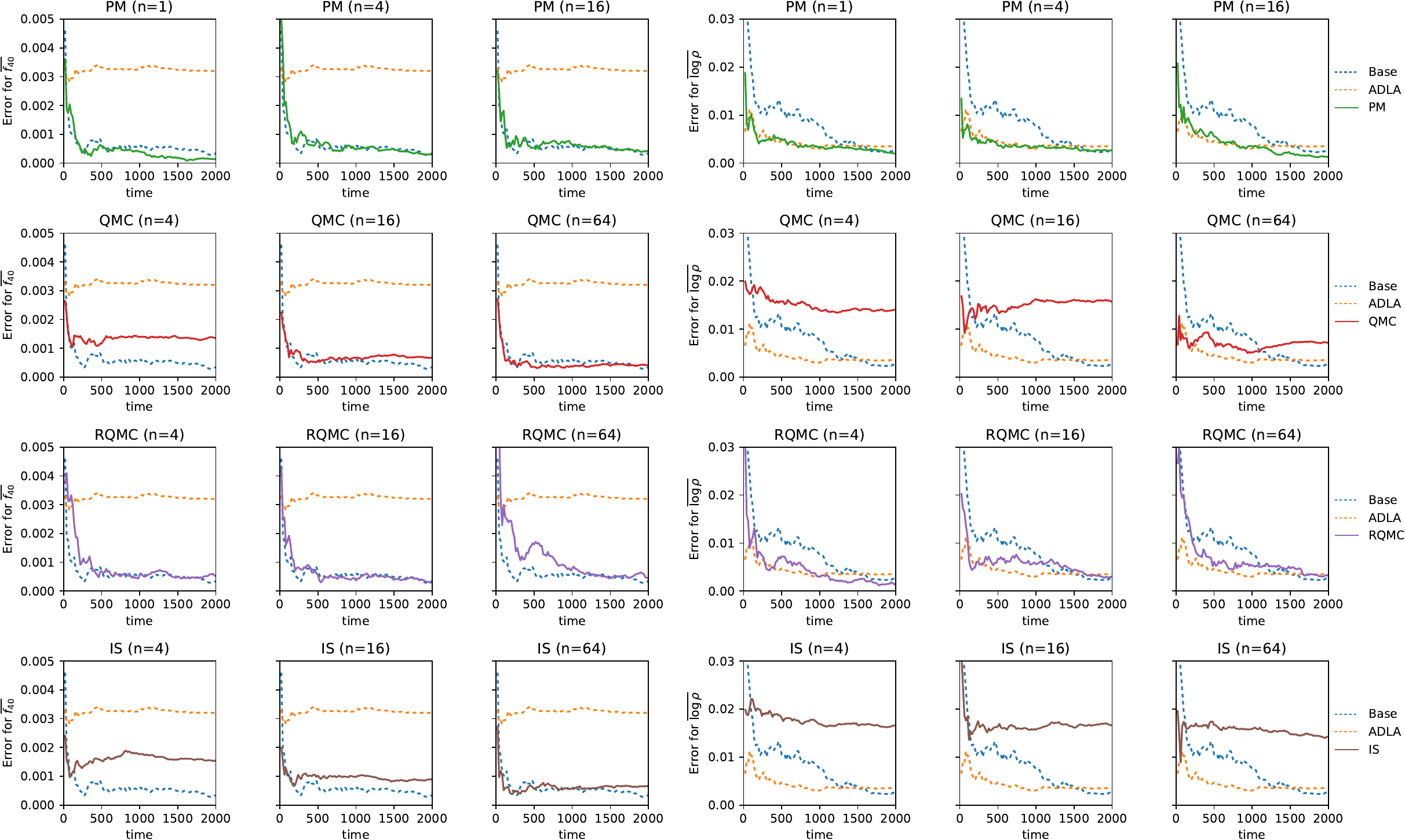}
    \caption{Error of estimating the means of parameters as a function of time in seconds for the synthesized Gaussian process with Poisson likelihood. Results are averaged from 5 independent runs. Ground-truth is estimated from NUTS on the unmarginalized model. The last row presents the results for $\hat{\pi}_{\epsilon_{1:n}}^\mathrm{IS}(\theta,y)$, and the first three rows are the same as Figure \ref{fig:gppoisson_error}. }
    \label{fig:gp_error_all}
\end{figure}
As a sanity check, we also tested the importance sampling-based model $\hat{\pi}_{\epsilon_{1:n}}^\mathrm{IS}(\theta,y)$ on the Gaussian process model. The results can be found in Figure \ref{fig:gp_error_all}. We find that it gives a higher error than QMC-ADLA, despite having the same cost. This is consistent in our other experiments, so we do not include it in our main results. 
\subsection{Additional sampling results}
We include additional sampling results of our experiments in Tables \ref{tab:sgp_additional}, \ref{tab:skim_additional} and \ref{tab:epil_additional}. In all models in our experiments, the Base method presents a non-negligible number of divergent transitions in HMC sampling. ADLA and our methods solve or mitigate this problem. 

\begin{table}[t]
    \centering
    \begin{tabular}{ccccc}
         \toprule
         Method &$n$& Number of samples& Time (min) & Divergences \\
         \midrule
         Base&-&100000& 113 (2)& 205 (116)\\
         ADLA&-&100000& 29 (2)& 0 (0)\\
         PM&1&100000& 168 (2)& 0 (0)\\
         PM&4&100000& 122 (0)& 0 (0)\\
         PM&16&100000& 128 (3)& 0 (0)\\
         QMC&4&100000& 46 (4)& 0 (0)\\
         QMC&16&100000& 47 (6)& 0 (0)\\
         QMC&64&100000& 53 (4)& 0 (0)\\
         RQMC&4&100000& 88 (3)& 0 (0)\\
         RQMC&16&100000& 126 (7)& 0 (0)\\
         RQMC&64&100000& 224 (6)& 0 (0)\\
         \bottomrule
    \end{tabular}
    \caption{More details about the experiments on the synthesized Gaussian process model. Mean and standard deviation across 5 runs are reported.}
    \label{tab:sgp_additional}
\end{table}

\begin{table}[t]
    \centering
    \begin{tabular}{ccccc}
         \toprule
         Method &$n$& Number of samples& Time (min) & Divergences \\
         \midrule
         Base&-&300000& 405 (110)& 153 (101)\\
         ADLA&-&300000& 431 (82)& 0 (0)\\
         QMC&4&100000& 465 (166)& 0 (0)\\
         QMC&16&100000& 358 (15)& 0 (0)\\
         QMC&64&100000& 401 (3)& 0 (0)\\
         RQMC&4&100000& 372 (12)& 4 (6)\\
         RQMC&16&100000& 447 (22)& 1 (1)\\
         RQMC&64&100000& 773 (153)& 3 (4)\\
         \bottomrule
    \end{tabular}
    \caption{More details about the experiments on the sparse kernel interaction model. Mean and standard deviation across 5 runs are reported.}
    \label{tab:skim_additional}
\end{table}

\begin{table}[t]
    \centering
    \begin{tabular}{ccccc}
         \toprule
         Method &$n$& Number of samples& Time (min) & Divergences \\
         \midrule
         Base&-&5000000&  442 (42) &164838 (86482)\\
         ADLA&-&4000000&328 (29) &10 (3)\\
         QMC&4&2500000& 424 (60) &5 (0)\\
         QMC&16&2500000& 543 (61) &8 (2)\\
         QMC&64&1500000&465 (43) &4 (2)\\
         RQMC&4&500000&429 (19) &2 (1) \\
         RQMC&16&100000& 490 (27) &0 (0)\\
         RQMC&64&100000&  794 (31) &0 (1)\\
         \bottomrule
    \end{tabular}
    \caption{More details about the experiments on the mixed effects models. Mean and standard deviation across 5 runs are reported.}
    \label{tab:epil_additional}
\end{table}
\subsection{More convergence results}
We selected two variables for each model to demonstrate the error convergence in the main text. In this part, we include more convergence results. For the synthesized Gaussian process model, we also estimate $\mathbb{E}[f_{80}]$ and $\mathbb{E}[\log\alpha]$ with the samples. The results are in Figure \ref{fig:sgp_error2}. We find our methods correct the error of ADLA and have faster or comparable convergence as HMC. For the sparse kernel interaction model, we estimate $\mathbb{E}[\log c_\mathrm{aux}]$ and $\mathbb{E}[\log\chi]$ and the results are in Figure \ref{fig:skim_error2}. For these two estimations, our methods correct the error but converge slower than HMC. This is mainly because the two variables are easy to sample, so the benefits of marginalization are not significant. For the mixed-effects model, we show the estimation error for $\mathbb{E}[\beta_0]$ and $\mathbb{E}[\log T_1]$ in Figure \ref{fig:epil_error2}. We find that, compared to HMC, ADLA and our methods have a higher error when estimating $\mathbb{E}[\beta_0]$ but a lower error when estimating $\mathbb{E}[\log T_1]$, although the magnitude of the first error is small. For the first error, it may be because there is still an error in the optimizer. For the second error, note that the raw HMC reports a significant number of divergent samples in Figure \ref{tab:epil_additional}, which biases the estimation. 
\begin{figure}[t]
    \centering
    \includegraphics[width=\linewidth]{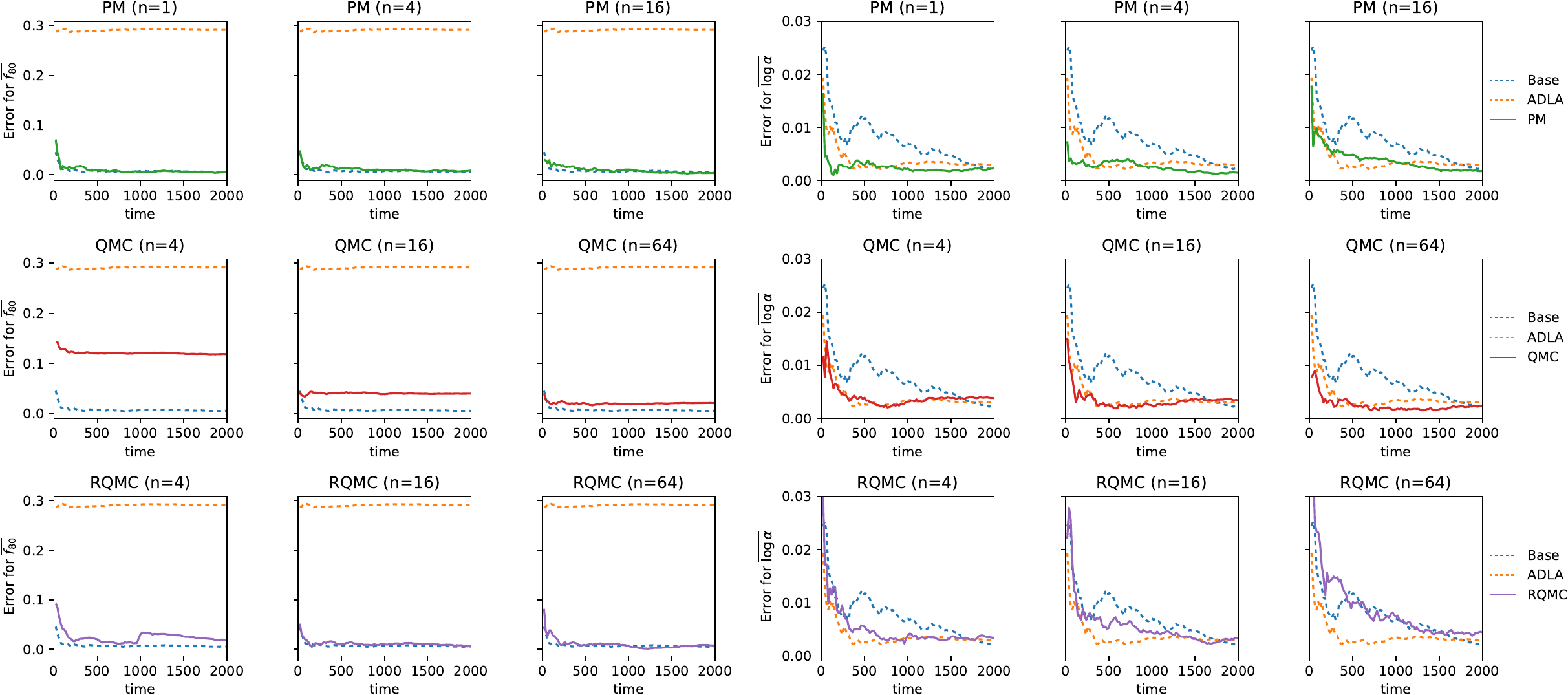}
    \caption{Error of estimating $\mathbb{E}[f_{80}]$ and $\mathbb{E}[\log\alpha]$ as a function of time in seconds for the
synthesized Gaussian process with Poisson likelihood. Results are averaged from 5 independent runs.}
    \label{fig:sgp_error2}
\end{figure}

\begin{figure}[t]
    \centering
    \includegraphics[width=\linewidth]{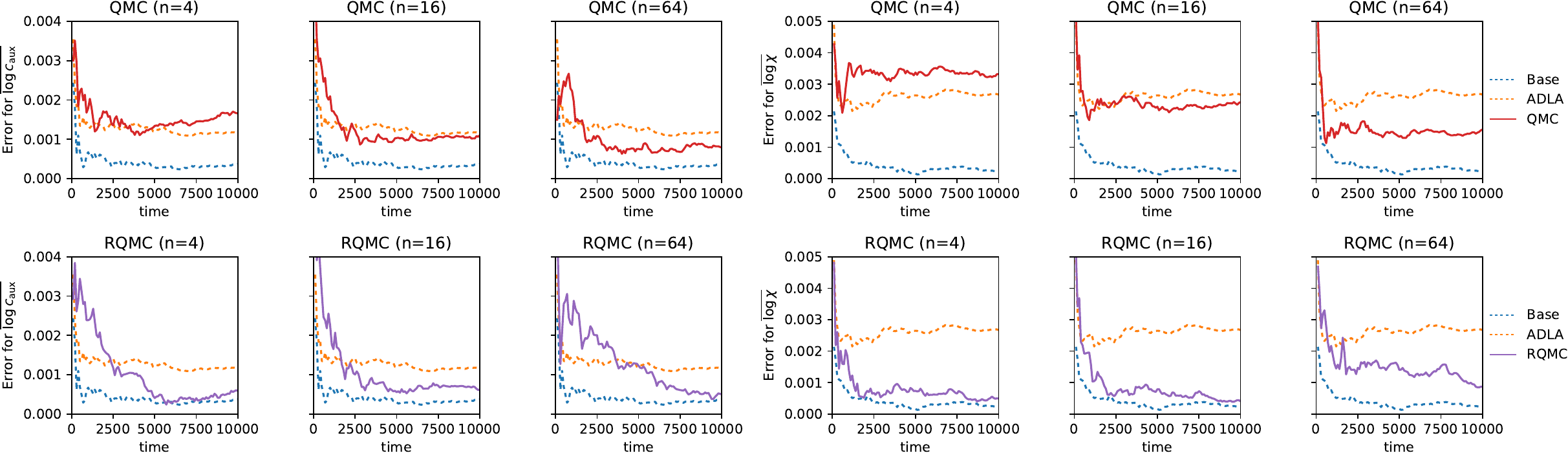}
    \caption{Error of estimating $\mathbb{E}[\log c_\mathrm{aux}]$ and $\mathbb{E}[\log\chi]$ as a function of time in seconds for the
sparse kernel interaction model. Results are averaged from 5 independent runs.}
    \label{fig:skim_error2}
\end{figure}

\begin{figure}[t]
    \centering
    \includegraphics[width=\linewidth]{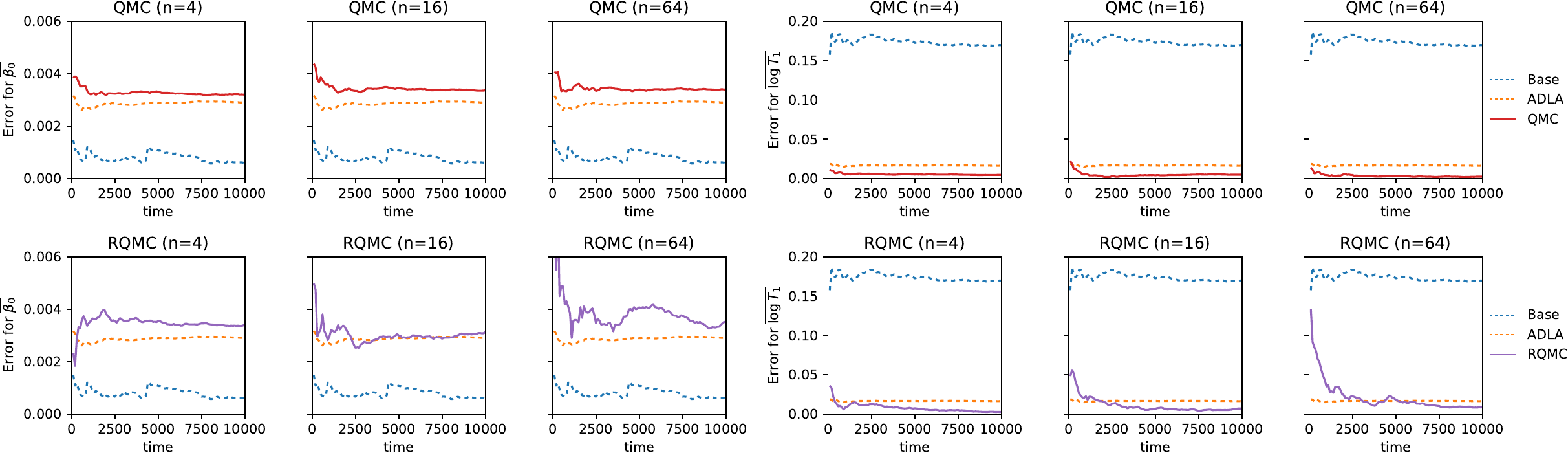}
    \caption{Error of estimating $\mathbb{E}[\beta_0]$ and $\mathbb{E}[\log T_1]$ as a function of time in seconds for the
mixed-effects model. Results are averaged from 5 independent runs.}
    \label{fig:epil_error2}
\end{figure}
\subsection{The posterior of RQMC-ADLA}
In RQMC-ADLA, an issue is that $\hat{\pi}^\mathrm{RQMC}(\theta,U,y)$ is not continuous with $U$. Here we demonstrate this problem with a simple example model:
\begin{align}
    \theta\sim\n(0,1),\ z\sim\n(\theta,1),\ y\sim \text{Cauchy}(z,1),
\end{align}
where the observation is $y=2$. We can compute $\hat{\pi}^\mathrm{RQMC}(U|\theta,y)$ with numerical integration. In Figure \ref{fig:rqmc_example}, we see that the discontinuity exists in this model. The number of discontinuous points is the same as $n$. Compared with the uncountable number of real numbers in $[0,1]$, the probability of sampling at a discontinuous point remains 0 even with large $n$. Also, as we increase $n$, $\hat{\pi}^\mathrm{RQMC}(U|\theta,y)$ becomes more uniform, implying easier inference. 
\begin{figure}
    \centering
    \includegraphics[width=\linewidth]{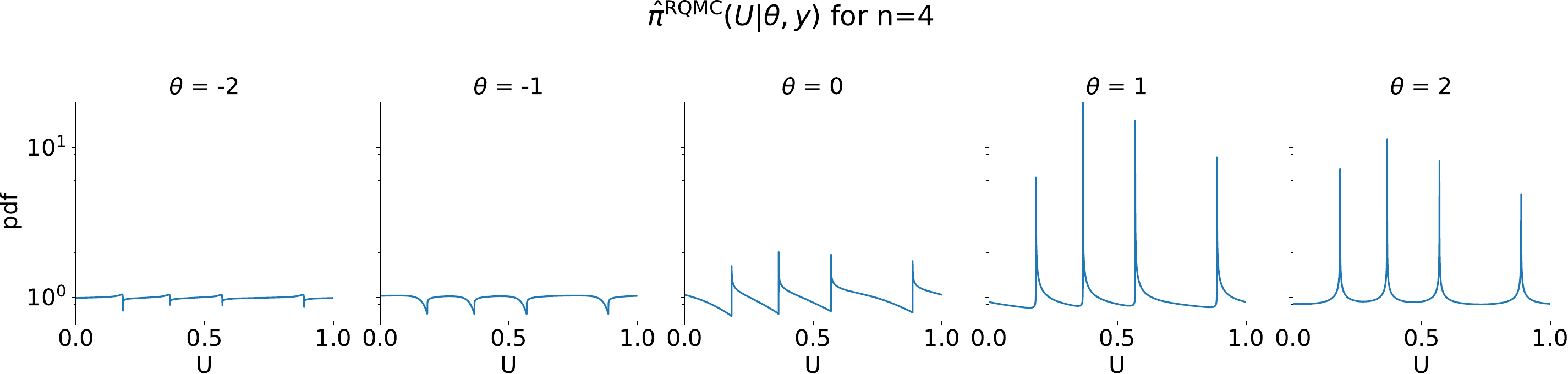}
    \includegraphics[width=\linewidth]{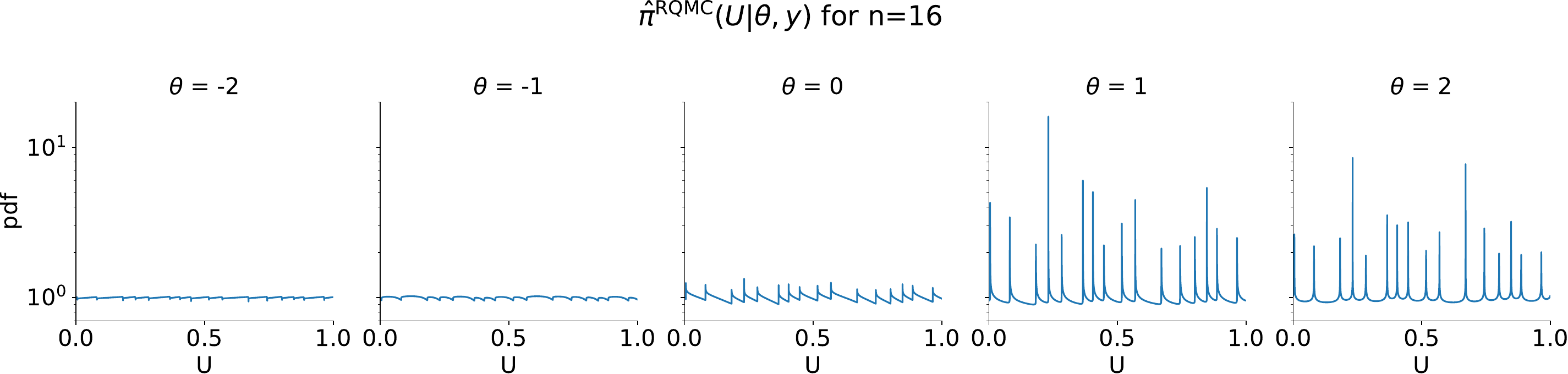}
    \includegraphics[width=\linewidth]{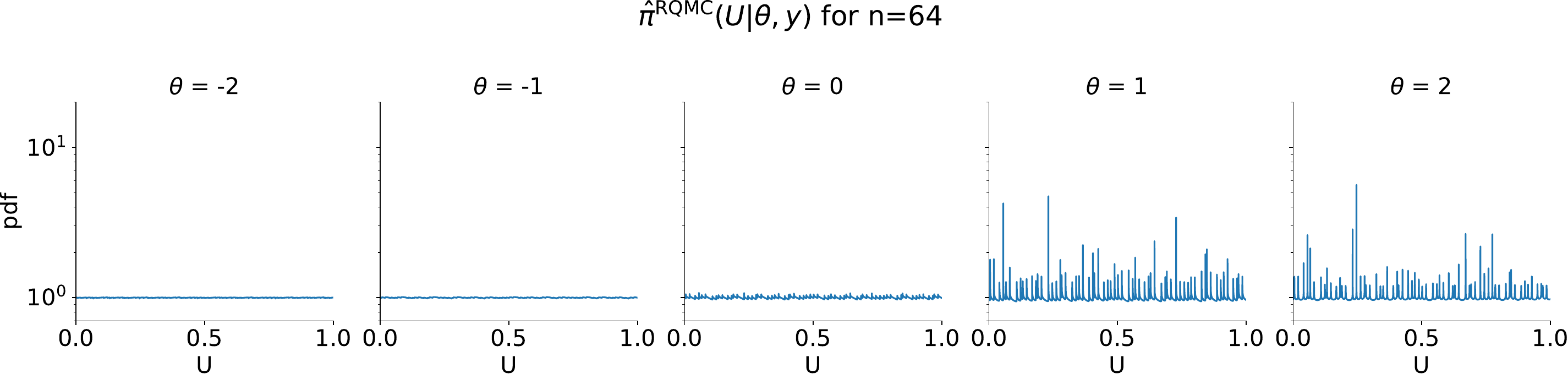}
    
    \caption{Distribution of $\hat{\pi}^\mathrm{RQMC}(U|\theta,y)$ with different $\theta$ and $n$ for the simple example model, using 5,000 evaluation points in $[0,1]$. The distribution is more uniform as we increase $n$. Also, $\hat{\pi}^\mathrm{RQMC}(U|\theta,y)$ is not continuous.}
    \label{fig:rqmc_example}
\end{figure}

\end{document}